%
%

\documentclass{article}
\usepackage{graphicx}
\usepackage{subfigure} 

\usepackage{tikz}
\usepackage{pgfplots}
\usetikzlibrary{decorations.pathreplacing}
\usepgfplotslibrary{groupplots}
\usetikzlibrary{plotmarks}

\usepackage{natbib}

\usepackage{algorithm}
\usepackage{algorithmic}

\usepackage{hyperref}


\usepackage{icml2012} 

 \icmltitlerunning{Efficiently Sampling Multiplicative Attribute Graphs}

\input{Definitions}

\begin{document}

\twocolumn[ \icmltitle{Efficiently Sampling Multiplicative Attribute
  Graphs\\ Using a Ball-Dropping Process}

\icmlauthor{Your Name}{email@yourdomain.edu} \icmladdress{Your
  Fantastic Institute, 314159 Pi St., Palo Alto, CA 94306 USA}
\icmlauthor{Your CoAuthor's Name}{email@coauthordomain.edu}
\icmladdress{Their Fantastic Institute, 27182 Exp St., Toronto, ON M6H
  2T1 CANADA}

\icmlkeywords{graph model, social networks, machine learning,
  sampling, accept-reject, multiplicative attribute graphs, stochastic
  kronecker graphs}

\vskip 0.3in 
]

\begin{abstract}
  We introduce a novel and efficient sampling algorithm for the
  Multiplicative Attribute Graph Model (MAGM - \citet{KimLes10}).  Our
  algorithm is \emph{strictly} more efficient than the algorithm
  proposed by \citet{YunVis12}, in the sense that our method extends the
  \emph{best} time complexity guarantee of their algorithm to a larger
  fraction of parameter space. Both in theory and in empirical
  evaluation on sparse graphs, our new algorithm outperforms the
  previous one.

  To design our algorithm, we first define a stochastic
  \emph{ball-dropping process} (BDP). Although a special case of this
  process was introduced as an efficient approximate sampling algorithm
  for the Kronecker Product Graph Model (KPGM -
  \citet{LesChaKleFaletal10}), neither \emph{why} such an apprximation
  works nor \emph{what} is the actual distribution this process is
  sampling from has been addressed so far to the best of our knowledge.

  Our rigorous treatment of the BDP enables us to clarify the rational
  behind a BDP approximation of KPGM, and design an efficient sampling
  algorithm for the MAGM.
\end{abstract}

\section{Introduction}
\label{sec:intro}
In this paper we are concerned with statistical models on graphs.  The
scalability of the model's inference and sampling algorithm is becoming
a critical issue especially for sparse graphs, as more and more graph
data is becoming available. For instance, one can easily crawl a graph
with millions of nodes in few days from Twitter.

In this regard, the Kronecker Product Graph Model (KPGM) of
\citet{LesChaKleFaletal10} is particularly attractive. In contrast to
traditional models such as Exponential Random Graph Model (ERGM) of
\citet{RobPatKalLus07} or Latent Factor Model of \citet{Hoff09} which
cannot scale beyond graphs with thousands of nodes, both inference in
and sampling from a KPGM scale to graphs with millions of nodes.

However, the model has recently been criticized to be not very
realistic, both in theory \citep{SesPinKol11} and in practice
\citep{MorNev09}.  This is actually not very surprising, as the KPGM is
clearly under-parametrized; usually only \emph{four} parameters are used
to fit a graph with millions of nodes.

In order to enrich the expressive power of the model \citet{KimLes10}
recently proposed a generalization of KPGM, which is named
Multiplicative Attribute Graph Model (MAGM). The advantage of MAGM
over KPGM has been argued from both theoretical \citep{KimLes10} and
empirical \citep{KimLes11} perspectives.

No matter how attractive such a generalization is in terms of modeling,
we still need to ask \emph{does the new model have efficient algorithms
  for inference and sampling?}  The inference part of this question was
studied by \citet{KimLes11}, while sampling part was partially addressed
by \citet{YunVis12}. In this paper, we further investigate the 
sampling issue.

It is straightforward to sample a graph from a MAGM in $\Theta\rbr{n^2}$
time, where $n$ is the number of nodes. Of course, such a na\"{\i}ve
algorithm does not scale to large graphs. Therefore, \citet{YunVis12}
suggested an algorithm which first samples $O\rbr{\rbr{\log_{2} n}^2}$
graphs from a KPGM and \emph{quilts} relevant parts of the sampled
graphs together to generate a \emph{single} sample from the MAGM. Since
approximate sampling from KPGM takes expected $O\rbr{e_{K} \log_{2} n}$
time, where $e_{K}$ is the expected number of edges in the KPGM, the
quilting algorithm runs in $O\rbr{ \rbr{\log_2 n}^{3} e_{K}}$ time with
high probability. The unsatisfactory aspect of the approach of
\citet{YunVis12}, however, is that the complexity bound holds only when
certain technical conditions are met.

On the other hand, for the most commonly used parameter settings (see
Section \ref{sec:TimeComplexity}) our algorithm runs in
$O\rbr{\rbr{\log_{2} n}^3 \rbr{e_{K} + e_{M}}}$ time with high
probability, where $e_{M}$ is the expected number of edges in the
MAGM. When the technical conditions of \citet{YunVis12} are met, then
$e_{M} = e_{K}$. Therefore, our method extends the \emph{best} time
complexity of \citet{YunVis12} to a larger fraction of parameter
space. Not only is our algorithm theoretically more interesting, we also
show that it empirically outperforms the previous algorithm in sampling
sparse graphs.

To design our algorithm, we first define a stochastic
\emph{ball-dropping process} (BDP) (\citet{ChaZhaFal2004},
\citet{GroSulPoo2010} and \citet{GleOwe2011}).  Although a special case
of BDP was already introduced as an approximate sampling algorithm for
KPGM \citep{LesChaKleFaletal10}, to the best of our knowledge neither
\emph{why} such an approximation works nor \emph{what} is the actual
distribution this process is sampling from has been addressed so far.

Our rigorous treatment of these problems enables us to clarify the
rational behind a BDP approximation of KPGM (Section~\ref{sec:bdp}), and
design an efficient sampling algorithm for MAGM
(Section~\ref{sec:algorithm}). We let BDP to \emph{propose} candidate
edges, and then \emph{reject} some of them with certain probability to
match the actual MAGM. This is the classic accept-reject sampling scheme
for sampling distributions. The main technical challenge which we
address in this paper is to show that the proposal distribution
compactly bounds the target distribution, so that we can guarantee the
efficiency of the algorithm.

\section{Notation and Preliminaries}
\label{sec:Notation}

We use upper-case letters for matrices (\eg $A$). Sets are denoted by
upper-case calligraphic letters (\eg $\Ecal$). We use Greek symbols for
parameters (\eg $\mu$), and integers are denoted in lower-case (\eg
$a, b, i, j$).

A directed graph is an ordered set $(\Vcal,\Ecal)$, where
$\Vcal$ is the set of nodes $\Vcal=\{1,2, \ldots, n\}$, and $\Ecal$ is
the set of edges $\Ecal \subset \Vcal \!\times\! \Vcal$.  We say that
there is an edge from node $i$ to $j$ when $(i,j)\in
\Ecal$. Furthermore, for each edge $(i,j) \in \Ecal$, $i$ and $j$ are
called source and target node of the edge, respectively. Note that
although we mainly discuss directed graphs in this paper, most of our
ideas can be straightforwardly applied to the case of undirected graphs.

It is convenient to describe a graph in terms of its $n \times n$
adjacency matrix $A$ where the $(i,j)$-th entry $A_{ij}$ of $A$ denotes
the number of edges from node $i$ to $j$. When there exists at most one
edge between every $(i,j)$ pair, i.e., $A_{ij} \leq 1$ for all $i, j$,
then we call it a \emph{simple} graph. On the other hand if multiple
edges are allowed then it is called a \emph{multi}-graph. In either
case, $\abr{\Ecal}$, the number of edges in the graph, is equal to
$\sum_{i,j=1}^n A_{ij}$.

The Kronecker multiplication of matrices is defined as follows
\citep{Bernstein05}.
\begin{definition}
  \label{def:kron}
  Given real matrices $X \in \RR^{n \times m}$ and $Y \in \RR^{p
    \times q}$, the Kronecker product $X \otimes Y \in \RR^{np \times
    mq}$ is
  \begin{align*}
    X \otimes Y := \mymatrix{cccc}{X_{11} Y & X_{12} Y & \ldots & X_{1m} Y \\
      \vdots & \vdots & \vdots & \vdots \\
      X_{n1} Y & X_{n2} Y & \ldots & X_{nm} Y}.
  \end{align*}
  The $k$-th Kronecker power $X^{\sbr{k}}$ is $\otimes_{i=1}^{k} X$.
\end{definition}

\subsection{Kronecker Product Graph Model (KPGM)}

\label{sec:skg}

The Kronecker Product Graph Model (KPGM) of \citet{LesChaKleFaletal10}
is usually parametrized by a $2 \times 2$ initiator matrix
\begin{align}
  \label{eq:theta-def}
  \Theta := \mymatrix{cc}{
    \theta_{00} & \theta_{01} \\
    \theta_{10} & \theta_{11} },
\end{align}
with each $\theta_{ij} \in \sbr{0, 1}$, and additional size parameter $d
\in \mathbb{Z}^+$. Using Kronecker multiplication, we construct a $2^d
\times 2^d$ matrix $\Gamma$ from $\Theta$:
\begin{align}
  \label{eq:kpgm_p1}
  \Gamma := \Theta^{\sbr{d}} = \underbrace{\Theta \otimes \Theta \otimes
    \ldots \otimes \Theta}_{d \text{ times}}.
\end{align}
$\Gamma$ is called an \emph{edge probability matrix}, because under the KPGM
the probability of observing an edge from node $i$ to node $j$ is simply
$\Gamma_{ij}$ (see Figure~\ref{fig:ball_drop}). From an adjacency matrix
point of view each $A_{ij}$ is an independent Bernoulli random variable
with $\PP\sbr{A_{ij} = 1} = \Gamma_{ij}$.

Note that one can make the model more general by using multiple
initiator matrices $\Theta^{(1)}, \Theta^{(2)}, \ldots, \Theta^{(d)}$
rather than a single matrix. In this case, the definition of edge
probability matrix $\Gamma$ is modified to 
\begin{align}
  \label{eq:kpgm_p2}
  \Gamma := \Theta^{(1)} \otimes \Theta^{(2)} \otimes \cdots \otimes
  \Theta^{(d)}.
\end{align}
In this paper we will adopt the more general setting
\eqref{eq:kpgm_p2}. For notational convenience, we stack these
initiator matrices to form the parameter array
\begin{align}
  \label{eq:thetat-def}
  \Thetat := \rbr{ \Theta^{(1)}, \Theta^{(2)}, \ldots,
  \Theta^{(d)}}.
\end{align}
Also, $\theta_{ab}^{(k)}$ denotes $(a+1, b+1)$-th entry of
$\Theta^{(k)}$. Given these parameters, the expected number of edges
$e_K$ of KPGM can be calculated using
\begin{align}
  e_K = \sum_{i,j=1}^n \Gamma_{ij} =
  \prod_{k=1}^d \rbr{\sum_{0 \leq a,b \leq 1} \theta_{ab}^{(k)}}.
  \label{eq:skg_edgenum}
\end{align}

\subsection{Multiplicative Attribute Graph Model (MAGM)}

\label{sec:mag}

An alternative way to view KPGM is as follows: associate the $i$-th node
with a bit-vector $b(i)$ of length $d$ such that $b_{k}(i)$ is the
$k$-th digit of integer $(i-1)$ in its binary representation. Then one
can verify that the $(i,j)$-th entry of the edge probability matrix
$\Gamma$ in \eqref{eq:kpgm_p2} can be written as
\begin{align}
  \label{eq:kpgm_prob}
  \Gamma_{ij} = \prod_{k=1}^d \theta^{(k)}_{b_k(i) \; b_k(j)}.
\end{align}
Under this interpretation, one may consider $b_k(i) = 1$ (resp.\
$b_k(i) = 0$) as denoting the presence (resp.\ absence) of the $k$-th
attribute in node $i$. The factor $\theta_{b_k(i) \; b_k(j)}^{(k)}$
denotes the probability of an edge between nodes $i$ and $j$ based on
the value of their $k$-th attribute. The attributes are assumed
independent, and therefore the overall probability of an edge between
$i$ and $j$ is just the product of $\theta_{b_k(i) \;
  b_k(j)}^{(k)}$'s.

The Multiplicative Attribute Graph Model (MAGM) of \citet{KimLes10} is
also obtained by associating a bit-vector $f(i)$ with a node
$i$. However, $f(i)$ need not be the binary representation of $(i-1)$
as was the case in the KPGM. In fact, the number of nodes $n$
need not even be equal to $2^d$. We simply assume that $f_{k}(i)$ is a
Bernoulli random variable with $\PP\sbr{f_{k}(i)=1} = \mu^{(k)}$. In
addition to $\Thetat$ defined in \eqref{eq:thetat-def}, the model now
has additional parameters $\mut := \rbr{\mu^{(1)}, \mu^{(2)}, \ldots,
  \mu^{(d)}}$, and the $(i,j)$-th entry of the edge probability matrix
$\Psi$ is written as
\begin{align}
  \label{eq:mag_prob}
  \Psi_{ij} = \prod_{k=1}^d \theta^{(k)}_{f_k(i) \; f_k(j)}.
\end{align}
The expected number of edges under this model will be denoted $e_M$,
and can be calculated using 
\begin{align}
  e_M = n^2 \cdot \prod_{k=1}^d \rbr{\sum_{0 \leq a,b \leq 1}
    \mu^{a+b}\rbr{1-\mu}^{2-a-b} \theta_{ab}^{(k)}} .
  \label{eq:mag_edgenum}
\end{align}
Note that when $\mu^{(1)} = \mu^{(2)} = \cdots = \mu^{(d)} = 0.5$, we
have $e_M = e_K$ (see Figure~\ref{fig:e_mk_mm}).

\section{Ball-Dropping Process (BDP)}
\label{sec:bdp}

A na\"{\i}ve but exact method of sampling from KPGM is to generate every
entry of adjacency matrix $A$ individually. Of course, such an approach
requires $\Theta\rbr{n^2}$ computation and does not scale to large
graphs.  Alternatively, \citet{LesChaKleFaletal10} suggest the following
stochastic process as an approximate but efficient sampling algorithm
(see Figure \ref{fig:ball_drop}):
\begin{itemize}
\item First, sample the number of edges $\abr{\Ecal}$ from a Poisson
  distribution with parameter $e_K$\footnote{Originally
    \citet{LesChaKleFaletal10} used the normal distribution, but Poisson
    is a very close approximation to the normal distribution especially
    when the number of expected edges is a large
    number 
    (Chapter 1.18, \citet{Dasgupta2011}).
  }.
\item The problem of sampling each individual edge is then converted to
  the problem of locating the position of a ``ball'' which will be
  dropped on a $2^d \times 2^d$ grid. The probability of the ball being
  located at coordinate $(i,j)$ is proportional to $\Gamma_{ij}$. This
  problem can be solved in $O\rbr{ d }$ time by employing a
  divide-and-conquer strategy \citep{LesChaKleFaletal10}. See
  Figure~\ref{fig:ball_drop} for a graphical illustration, and
  Algorithm~\ref{alg:kpgmsample} in Appendix~\ref{sec:pseudo_code} for
  the pseudo-code.
\end{itemize}
\begin{figure}
  \begin{center}
    \begin{tikzpicture}[scale=1.0]
      \draw[draw, fill=black!6] (0.000000,2.400000) rectangle (0.300000, 2.100000);
\draw[draw, fill=black!11] (0.000000,2.100000) rectangle (0.300000, 1.800000);
\draw[draw, fill=black!11] (0.000000,1.800000) rectangle (0.300000, 1.500000);
\draw[draw, fill=black!19] (0.000000,1.500000) rectangle (0.300000, 1.200000);
\draw[draw, fill=black!11] (0.000000,1.200000) rectangle (0.300000, 0.900000);
\draw[draw, fill=black!19] (0.000000,0.900000) rectangle (0.300000, 0.600000);
\draw[draw, fill=black!19] (0.000000,0.600000) rectangle (0.300000, 0.300000);
\draw[draw, fill=black!34] (0.000000,0.300000) rectangle (0.300000, 0.000000);
\draw[draw, fill=black!11] (0.300000,2.400000) rectangle (0.600000, 2.100000);
\draw[draw, fill=black!14] (0.300000,2.100000) rectangle (0.600000, 1.800000);
\draw[draw, fill=black!19] (0.300000,1.800000) rectangle (0.600000, 1.500000);
\draw[draw, fill=black!25] (0.300000,1.500000) rectangle (0.600000, 1.200000);
\draw[draw, fill=black!19] (0.300000,1.200000) rectangle (0.600000, 0.900000);
\draw[draw, fill=black!25] (0.300000,0.900000) rectangle (0.600000, 0.600000);
\draw[draw, fill=black!34] (0.300000,0.600000) rectangle (0.600000, 0.300000);
\draw[draw, fill=black!44] (0.300000,0.300000) rectangle (0.600000, 0.000000);
\draw[draw, fill=black!11] (0.600000,2.400000) rectangle (0.900000, 2.100000);
\draw[draw, fill=black!19] (0.600000,2.100000) rectangle (0.900000, 1.800000);
\draw[draw, fill=black!14] (0.600000,1.800000) rectangle (0.900000, 1.500000);
\draw[draw, fill=black!25] (0.600000,1.500000) rectangle (0.900000, 1.200000);
\draw[draw, fill=black!19] (0.600000,1.200000) rectangle (0.900000, 0.900000);
\draw[draw, fill=black!34] (0.600000,0.900000) rectangle (0.900000, 0.600000);
\draw[draw, fill=black!25] (0.600000,0.600000) rectangle (0.900000, 0.300000);
\draw[draw, fill=black!44] (0.600000,0.300000) rectangle (0.900000, 0.000000);
\draw[draw, fill=black!19] (0.900000,2.400000) rectangle (1.200000, 2.100000);
\draw[draw, fill=black!25] (0.900000,2.100000) rectangle (1.200000, 1.800000);
\draw[draw, fill=black!25] (0.900000,1.800000) rectangle (1.200000, 1.500000);
\draw[draw, fill=black!32] (0.900000,1.500000) rectangle (1.200000, 1.200000);
\draw[draw, fill=black!34] (0.900000,1.200000) rectangle (1.200000, 0.900000);
\draw[draw, fill=black!44] (0.900000,0.900000) rectangle (1.200000, 0.600000);
\draw[draw, fill=black!44] (0.900000,0.600000) rectangle (1.200000, 0.300000);
\draw[draw, fill=black!56] (0.900000,0.300000) rectangle (1.200000, 0.000000);
\draw[draw, fill=black!11] (1.200000,2.400000) rectangle (1.500000, 2.100000);
\draw[draw, fill=black!19] (1.200000,2.100000) rectangle (1.500000, 1.800000);
\draw[draw, fill=black!19] (1.200000,1.800000) rectangle (1.500000, 1.500000);
\draw[draw, fill=black!34] (1.200000,1.500000) rectangle (1.500000, 1.200000);
\draw[draw, fill=black!14] (1.200000,1.200000) rectangle (1.500000, 0.900000);
\draw[draw, fill=black!25] (1.200000,0.900000) rectangle (1.500000, 0.600000);
\draw[draw, fill=black!25] (1.200000,0.600000) rectangle (1.500000, 0.300000);
\draw[draw, fill=black!44] (1.200000,0.300000) rectangle (1.500000, 0.000000);
\draw[draw, fill=black!19] (1.500000,2.400000) rectangle (1.800000, 2.100000);
\draw[draw, fill=black!25] (1.500000,2.100000) rectangle (1.800000, 1.800000);
\draw[draw, fill=black!34] (1.500000,1.800000) rectangle (1.800000, 1.500000);
\draw[draw, fill=black!44] (1.500000,1.500000) rectangle (1.800000, 1.200000);
\draw[draw, fill=black!25] (1.500000,1.200000) rectangle (1.800000, 0.900000);
\draw[draw, fill=black!32] (1.500000,0.900000) rectangle (1.800000, 0.600000);
\draw[draw, fill=black!44] (1.500000,0.600000) rectangle (1.800000, 0.300000);
\draw[draw, fill=black!56] (1.500000,0.300000) rectangle (1.800000, 0.000000);
\draw[draw, fill=black!19] (1.800000,2.400000) rectangle (2.100000, 2.100000);
\draw[draw, fill=black!34] (1.800000,2.100000) rectangle (2.100000, 1.800000);
\draw[draw, fill=black!25] (1.800000,1.800000) rectangle (2.100000, 1.500000);
\draw[draw, fill=black!44] (1.800000,1.500000) rectangle (2.100000, 1.200000);
\draw[draw, fill=black!25] (1.800000,1.200000) rectangle (2.100000, 0.900000);
\draw[draw, fill=black!44] (1.800000,0.900000) rectangle (2.100000, 0.600000);
\draw[draw, fill=black!32] (1.800000,0.600000) rectangle (2.100000, 0.300000);
\draw[draw, fill=black!56] (1.800000,0.300000) rectangle (2.100000, 0.000000);
\draw[draw, fill=black!34] (2.100000,2.400000) rectangle (2.400000, 2.100000);
\draw[draw, fill=black!44] (2.100000,2.100000) rectangle (2.400000, 1.800000);
\draw[draw, fill=black!44] (2.100000,1.800000) rectangle (2.400000, 1.500000);
\draw[draw, fill=black!56] (2.100000,1.500000) rectangle (2.400000, 1.200000);
\draw[draw, fill=black!44] (2.100000,1.200000) rectangle (2.400000, 0.900000);
\draw[draw, fill=black!56] (2.100000,0.900000) rectangle (2.400000, 0.600000);
\draw[draw, fill=black!56] (2.100000,0.600000) rectangle (2.400000, 0.300000);
\draw[draw, fill=black!72] (2.100000,0.300000) rectangle (2.400000, 0.000000);
      \draw[below] (1.25, 0) node{{\small (a)}};
    \end{tikzpicture}
    \;
    \begin{tikzpicture}[scale=1.0]
      \begin{scope}[xshift = 0cm, yshift=0cm]
        \draw[draw, fill=black!6] (0.000000,2.400000) rectangle (0.300000, 2.100000);
\draw[draw, fill=black!11] (0.000000,2.100000) rectangle (0.300000, 1.800000);
\draw[draw, fill=black!11] (0.000000,1.800000) rectangle (0.300000, 1.500000);
\draw[draw, fill=black!19] (0.000000,1.500000) rectangle (0.300000, 1.200000);
\draw[draw, fill=black!11] (0.000000,1.200000) rectangle (0.300000, 0.900000);
\draw[draw, fill=black!19] (0.000000,0.900000) rectangle (0.300000, 0.600000);
\draw[draw, fill=black!19] (0.000000,0.600000) rectangle (0.300000, 0.300000);
\draw[draw, fill=black!34] (0.000000,0.300000) rectangle (0.300000, 0.000000);
\draw[draw, fill=black!11] (0.300000,2.400000) rectangle (0.600000, 2.100000);
\draw[draw, fill=black!14] (0.300000,2.100000) rectangle (0.600000, 1.800000);
\draw[draw, fill=black!19] (0.300000,1.800000) rectangle (0.600000, 1.500000);
\draw[draw, fill=black!25] (0.300000,1.500000) rectangle (0.600000, 1.200000);
\draw[draw, fill=black!19] (0.300000,1.200000) rectangle (0.600000, 0.900000);
\draw[draw, fill=black!25] (0.300000,0.900000) rectangle (0.600000, 0.600000);
\draw[draw, fill=black!34] (0.300000,0.600000) rectangle (0.600000, 0.300000);
\draw[draw, fill=black!44] (0.300000,0.300000) rectangle (0.600000, 0.000000);
\draw[draw, fill=black!11] (0.600000,2.400000) rectangle (0.900000, 2.100000);
\draw[draw, fill=black!19] (0.600000,2.100000) rectangle (0.900000, 1.800000);
\draw[draw, fill=black!14] (0.600000,1.800000) rectangle (0.900000, 1.500000);
\draw[draw, fill=black!25] (0.600000,1.500000) rectangle (0.900000, 1.200000);
\draw[draw, fill=black!19] (0.600000,1.200000) rectangle (0.900000, 0.900000);
\draw[draw, fill=black!34] (0.600000,0.900000) rectangle (0.900000, 0.600000);
\draw[draw, fill=black!25] (0.600000,0.600000) rectangle (0.900000, 0.300000);
\draw[draw, fill=black!44] (0.600000,0.300000) rectangle (0.900000, 0.000000);
\draw[draw, fill=black!19] (0.900000,2.400000) rectangle (1.200000, 2.100000);
\draw[draw, fill=black!25] (0.900000,2.100000) rectangle (1.200000, 1.800000);
\draw[draw, fill=black!25] (0.900000,1.800000) rectangle (1.200000, 1.500000);
\draw[draw, fill=black!32] (0.900000,1.500000) rectangle (1.200000, 1.200000);
\draw[draw, fill=black!34] (0.900000,1.200000) rectangle (1.200000, 0.900000);
\draw[draw, fill=black!44] (0.900000,0.900000) rectangle (1.200000, 0.600000);
\draw[draw, fill=black!44] (0.900000,0.600000) rectangle (1.200000, 0.300000);
\draw[draw, fill=black!56] (0.900000,0.300000) rectangle (1.200000, 0.000000);
\draw[draw, fill=black!11] (1.200000,2.400000) rectangle (1.500000, 2.100000);
\draw[draw, fill=black!19] (1.200000,2.100000) rectangle (1.500000, 1.800000);
\draw[draw, fill=black!19] (1.200000,1.800000) rectangle (1.500000, 1.500000);
\draw[draw, fill=black!34] (1.200000,1.500000) rectangle (1.500000, 1.200000);
\draw[draw, fill=black!14] (1.200000,1.200000) rectangle (1.500000, 0.900000);
\draw[draw, fill=black!25] (1.200000,0.900000) rectangle (1.500000, 0.600000);
\draw[draw, fill=black!25] (1.200000,0.600000) rectangle (1.500000, 0.300000);
\draw[draw, fill=black!44] (1.200000,0.300000) rectangle (1.500000, 0.000000);
\draw[draw, fill=black!19] (1.500000,2.400000) rectangle (1.800000, 2.100000);
\draw[draw, fill=black!25] (1.500000,2.100000) rectangle (1.800000, 1.800000);
\draw[draw, fill=black!34] (1.500000,1.800000) rectangle (1.800000, 1.500000);
\draw[draw, fill=black!44] (1.500000,1.500000) rectangle (1.800000, 1.200000);
\draw[draw, fill=black!25] (1.500000,1.200000) rectangle (1.800000, 0.900000);
\draw[draw, fill=black!32] (1.500000,0.900000) rectangle (1.800000, 0.600000);
\draw[draw, fill=black!44] (1.500000,0.600000) rectangle (1.800000, 0.300000);
\draw[draw, fill=black!56] (1.500000,0.300000) rectangle (1.800000, 0.000000);
\draw[draw, fill=black!19] (1.800000,2.400000) rectangle (2.100000, 2.100000);
\draw[draw, fill=black!34] (1.800000,2.100000) rectangle (2.100000, 1.800000);
\draw[draw, fill=black!25] (1.800000,1.800000) rectangle (2.100000, 1.500000);
\draw[draw, fill=black!44] (1.800000,1.500000) rectangle (2.100000, 1.200000);
\draw[draw, fill=black!25] (1.800000,1.200000) rectangle (2.100000, 0.900000);
\draw[draw, fill=black!44] (1.800000,0.900000) rectangle (2.100000, 0.600000);
\draw[draw, fill=black!32] (1.800000,0.600000) rectangle (2.100000, 0.300000);
\draw[draw, fill=black!56] (1.800000,0.300000) rectangle (2.100000, 0.000000);
\draw[draw, fill=black!34] (2.100000,2.400000) rectangle (2.400000, 2.100000);
\draw[draw, fill=black!44] (2.100000,2.100000) rectangle (2.400000, 1.800000);
\draw[draw, fill=black!44] (2.100000,1.800000) rectangle (2.400000, 1.500000);
\draw[draw, fill=black!56] (2.100000,1.500000) rectangle (2.400000, 1.200000);
\draw[draw, fill=black!44] (2.100000,1.200000) rectangle (2.400000, 0.900000);
\draw[draw, fill=black!56] (2.100000,0.900000) rectangle (2.400000, 0.600000);
\draw[draw, fill=black!56] (2.100000,0.600000) rectangle (2.400000, 0.300000);
\draw[draw, fill=black!72] (2.100000,0.300000) rectangle (2.400000, 0.000000);
        \draw (0, 1.2) edge[red, thick] (2.4, 1.2);
        \draw (1.2, 0) edge[red, thick] (1.2, 2.4);
        \draw[red, draw, thick] (0,0) rectangle (2.4, 2.4);
        \draw[red] (0.6, 1.8) node {{ \bf 0.4}};
        \draw[red] (1.8, 1.8) node {{ \bf 0.7}};
        \draw[red] (0.6, 0.6) node {{ \bf 0.7}};
        \draw[blue] (1.8, 0.6) node {{ \bf 0.9}};
        \draw[below] (1.25, 0) node{{\small (b)}};
      \end{scope}
    \end{tikzpicture}    

    \begin{tikzpicture}[scale=1.0]
      \draw[draw, fill=black!6] (0.000000,2.400000) rectangle (0.300000, 2.100000);
\draw[draw, fill=black!11] (0.000000,2.100000) rectangle (0.300000, 1.800000);
\draw[draw, fill=black!11] (0.000000,1.800000) rectangle (0.300000, 1.500000);
\draw[draw, fill=black!19] (0.000000,1.500000) rectangle (0.300000, 1.200000);
\draw[draw, fill=black!11] (0.000000,1.200000) rectangle (0.300000, 0.900000);
\draw[draw, fill=black!19] (0.000000,0.900000) rectangle (0.300000, 0.600000);
\draw[draw, fill=black!19] (0.000000,0.600000) rectangle (0.300000, 0.300000);
\draw[draw, fill=black!34] (0.000000,0.300000) rectangle (0.300000, 0.000000);
\draw[draw, fill=black!11] (0.300000,2.400000) rectangle (0.600000, 2.100000);
\draw[draw, fill=black!14] (0.300000,2.100000) rectangle (0.600000, 1.800000);
\draw[draw, fill=black!19] (0.300000,1.800000) rectangle (0.600000, 1.500000);
\draw[draw, fill=black!25] (0.300000,1.500000) rectangle (0.600000, 1.200000);
\draw[draw, fill=black!19] (0.300000,1.200000) rectangle (0.600000, 0.900000);
\draw[draw, fill=black!25] (0.300000,0.900000) rectangle (0.600000, 0.600000);
\draw[draw, fill=black!34] (0.300000,0.600000) rectangle (0.600000, 0.300000);
\draw[draw, fill=black!44] (0.300000,0.300000) rectangle (0.600000, 0.000000);
\draw[draw, fill=black!11] (0.600000,2.400000) rectangle (0.900000, 2.100000);
\draw[draw, fill=black!19] (0.600000,2.100000) rectangle (0.900000, 1.800000);
\draw[draw, fill=black!14] (0.600000,1.800000) rectangle (0.900000, 1.500000);
\draw[draw, fill=black!25] (0.600000,1.500000) rectangle (0.900000, 1.200000);
\draw[draw, fill=black!19] (0.600000,1.200000) rectangle (0.900000, 0.900000);
\draw[draw, fill=black!34] (0.600000,0.900000) rectangle (0.900000, 0.600000);
\draw[draw, fill=black!25] (0.600000,0.600000) rectangle (0.900000, 0.300000);
\draw[draw, fill=black!44] (0.600000,0.300000) rectangle (0.900000, 0.000000);
\draw[draw, fill=black!19] (0.900000,2.400000) rectangle (1.200000, 2.100000);
\draw[draw, fill=black!25] (0.900000,2.100000) rectangle (1.200000, 1.800000);
\draw[draw, fill=black!25] (0.900000,1.800000) rectangle (1.200000, 1.500000);
\draw[draw, fill=black!32] (0.900000,1.500000) rectangle (1.200000, 1.200000);
\draw[draw, fill=black!34] (0.900000,1.200000) rectangle (1.200000, 0.900000);
\draw[draw, fill=black!44] (0.900000,0.900000) rectangle (1.200000, 0.600000);
\draw[draw, fill=black!44] (0.900000,0.600000) rectangle (1.200000, 0.300000);
\draw[draw, fill=black!56] (0.900000,0.300000) rectangle (1.200000, 0.000000);
\draw[draw, fill=black!11] (1.200000,2.400000) rectangle (1.500000, 2.100000);
\draw[draw, fill=black!19] (1.200000,2.100000) rectangle (1.500000, 1.800000);
\draw[draw, fill=black!19] (1.200000,1.800000) rectangle (1.500000, 1.500000);
\draw[draw, fill=black!34] (1.200000,1.500000) rectangle (1.500000, 1.200000);
\draw[draw, fill=black!14] (1.200000,1.200000) rectangle (1.500000, 0.900000);
\draw[draw, fill=black!25] (1.200000,0.900000) rectangle (1.500000, 0.600000);
\draw[draw, fill=black!25] (1.200000,0.600000) rectangle (1.500000, 0.300000);
\draw[draw, fill=black!44] (1.200000,0.300000) rectangle (1.500000, 0.000000);
\draw[draw, fill=black!19] (1.500000,2.400000) rectangle (1.800000, 2.100000);
\draw[draw, fill=black!25] (1.500000,2.100000) rectangle (1.800000, 1.800000);
\draw[draw, fill=black!34] (1.500000,1.800000) rectangle (1.800000, 1.500000);
\draw[draw, fill=black!44] (1.500000,1.500000) rectangle (1.800000, 1.200000);
\draw[draw, fill=black!25] (1.500000,1.200000) rectangle (1.800000, 0.900000);
\draw[draw, fill=black!32] (1.500000,0.900000) rectangle (1.800000, 0.600000);
\draw[draw, fill=black!44] (1.500000,0.600000) rectangle (1.800000, 0.300000);
\draw[draw, fill=black!56] (1.500000,0.300000) rectangle (1.800000, 0.000000);
\draw[draw, fill=black!19] (1.800000,2.400000) rectangle (2.100000, 2.100000);
\draw[draw, fill=black!34] (1.800000,2.100000) rectangle (2.100000, 1.800000);
\draw[draw, fill=black!25] (1.800000,1.800000) rectangle (2.100000, 1.500000);
\draw[draw, fill=black!44] (1.800000,1.500000) rectangle (2.100000, 1.200000);
\draw[draw, fill=black!25] (1.800000,1.200000) rectangle (2.100000, 0.900000);
\draw[draw, fill=black!44] (1.800000,0.900000) rectangle (2.100000, 0.600000);
\draw[draw, fill=black!32] (1.800000,0.600000) rectangle (2.100000, 0.300000);
\draw[draw, fill=black!56] (1.800000,0.300000) rectangle (2.100000, 0.000000);
\draw[draw, fill=black!34] (2.100000,2.400000) rectangle (2.400000, 2.100000);
\draw[draw, fill=black!44] (2.100000,2.100000) rectangle (2.400000, 1.800000);
\draw[draw, fill=black!44] (2.100000,1.800000) rectangle (2.400000, 1.500000);
\draw[draw, fill=black!56] (2.100000,1.500000) rectangle (2.400000, 1.200000);
\draw[draw, fill=black!44] (2.100000,1.200000) rectangle (2.400000, 0.900000);
\draw[draw, fill=black!56] (2.100000,0.900000) rectangle (2.400000, 0.600000);
\draw[draw, fill=black!56] (2.100000,0.600000) rectangle (2.400000, 0.300000);
\draw[draw, fill=black!72] (2.100000,0.300000) rectangle (2.400000, 0.000000);
        \draw[red, draw, thick] (1.2,0) rectangle (2.4, 1.2);
        \draw (1.2, 0.6) edge[red, thick] (2.4, 0.6);
        \draw (1.8, 0) edge[red, thick] (1.8, 1.2);
        \draw[red] (1.5, 0.9) node {{ 0.4}};
        \draw[red] (2.1, 0.9) node {{ 0.7}};
        \draw[blue] (1.5, 0.3) node {{ 0.7}};
        \draw[red] (2.1, 0.3) node {{ 0.9}};
        \draw[below] (1.25, 0) node{{\small (c)}};
    \end{tikzpicture}
    \;
    \begin{tikzpicture}[scale=1.0]
      \draw[draw, fill=black!6] (0.000000,2.400000) rectangle (0.300000, 2.100000);
\draw[draw, fill=black!11] (0.000000,2.100000) rectangle (0.300000, 1.800000);
\draw[draw, fill=black!11] (0.000000,1.800000) rectangle (0.300000, 1.500000);
\draw[draw, fill=black!19] (0.000000,1.500000) rectangle (0.300000, 1.200000);
\draw[draw, fill=black!11] (0.000000,1.200000) rectangle (0.300000, 0.900000);
\draw[draw, fill=black!19] (0.000000,0.900000) rectangle (0.300000, 0.600000);
\draw[draw, fill=black!19] (0.000000,0.600000) rectangle (0.300000, 0.300000);
\draw[draw, fill=black!34] (0.000000,0.300000) rectangle (0.300000, 0.000000);
\draw[draw, fill=black!11] (0.300000,2.400000) rectangle (0.600000, 2.100000);
\draw[draw, fill=black!14] (0.300000,2.100000) rectangle (0.600000, 1.800000);
\draw[draw, fill=black!19] (0.300000,1.800000) rectangle (0.600000, 1.500000);
\draw[draw, fill=black!25] (0.300000,1.500000) rectangle (0.600000, 1.200000);
\draw[draw, fill=black!19] (0.300000,1.200000) rectangle (0.600000, 0.900000);
\draw[draw, fill=black!25] (0.300000,0.900000) rectangle (0.600000, 0.600000);
\draw[draw, fill=black!34] (0.300000,0.600000) rectangle (0.600000, 0.300000);
\draw[draw, fill=black!44] (0.300000,0.300000) rectangle (0.600000, 0.000000);
\draw[draw, fill=black!11] (0.600000,2.400000) rectangle (0.900000, 2.100000);
\draw[draw, fill=black!19] (0.600000,2.100000) rectangle (0.900000, 1.800000);
\draw[draw, fill=black!14] (0.600000,1.800000) rectangle (0.900000, 1.500000);
\draw[draw, fill=black!25] (0.600000,1.500000) rectangle (0.900000, 1.200000);
\draw[draw, fill=black!19] (0.600000,1.200000) rectangle (0.900000, 0.900000);
\draw[draw, fill=black!34] (0.600000,0.900000) rectangle (0.900000, 0.600000);
\draw[draw, fill=black!25] (0.600000,0.600000) rectangle (0.900000, 0.300000);
\draw[draw, fill=black!44] (0.600000,0.300000) rectangle (0.900000, 0.000000);
\draw[draw, fill=black!19] (0.900000,2.400000) rectangle (1.200000, 2.100000);
\draw[draw, fill=black!25] (0.900000,2.100000) rectangle (1.200000, 1.800000);
\draw[draw, fill=black!25] (0.900000,1.800000) rectangle (1.200000, 1.500000);
\draw[draw, fill=black!32] (0.900000,1.500000) rectangle (1.200000, 1.200000);
\draw[draw, fill=black!34] (0.900000,1.200000) rectangle (1.200000, 0.900000);
\draw[draw, fill=black!44] (0.900000,0.900000) rectangle (1.200000, 0.600000);
\draw[draw, fill=black!44] (0.900000,0.600000) rectangle (1.200000, 0.300000);
\draw[draw, fill=black!56] (0.900000,0.300000) rectangle (1.200000, 0.000000);
\draw[draw, fill=black!11] (1.200000,2.400000) rectangle (1.500000, 2.100000);
\draw[draw, fill=black!19] (1.200000,2.100000) rectangle (1.500000, 1.800000);
\draw[draw, fill=black!19] (1.200000,1.800000) rectangle (1.500000, 1.500000);
\draw[draw, fill=black!34] (1.200000,1.500000) rectangle (1.500000, 1.200000);
\draw[draw, fill=black!14] (1.200000,1.200000) rectangle (1.500000, 0.900000);
\draw[draw, fill=black!25] (1.200000,0.900000) rectangle (1.500000, 0.600000);
\draw[draw, fill=black!25] (1.200000,0.600000) rectangle (1.500000, 0.300000);
\draw[draw, fill=black!44] (1.200000,0.300000) rectangle (1.500000, 0.000000);
\draw[draw, fill=black!19] (1.500000,2.400000) rectangle (1.800000, 2.100000);
\draw[draw, fill=black!25] (1.500000,2.100000) rectangle (1.800000, 1.800000);
\draw[draw, fill=black!34] (1.500000,1.800000) rectangle (1.800000, 1.500000);
\draw[draw, fill=black!44] (1.500000,1.500000) rectangle (1.800000, 1.200000);
\draw[draw, fill=black!25] (1.500000,1.200000) rectangle (1.800000, 0.900000);
\draw[draw, fill=black!32] (1.500000,0.900000) rectangle (1.800000, 0.600000);
\draw[draw, fill=black!44] (1.500000,0.600000) rectangle (1.800000, 0.300000);
\draw[draw, fill=black!56] (1.500000,0.300000) rectangle (1.800000, 0.000000);
\draw[draw, fill=black!19] (1.800000,2.400000) rectangle (2.100000, 2.100000);
\draw[draw, fill=black!34] (1.800000,2.100000) rectangle (2.100000, 1.800000);
\draw[draw, fill=black!25] (1.800000,1.800000) rectangle (2.100000, 1.500000);
\draw[draw, fill=black!44] (1.800000,1.500000) rectangle (2.100000, 1.200000);
\draw[draw, fill=black!25] (1.800000,1.200000) rectangle (2.100000, 0.900000);
\draw[draw, fill=black!44] (1.800000,0.900000) rectangle (2.100000, 0.600000);
\draw[draw, fill=black!32] (1.800000,0.600000) rectangle (2.100000, 0.300000);
\draw[draw, fill=black!56] (1.800000,0.300000) rectangle (2.100000, 0.000000);
\draw[draw, fill=black!34] (2.100000,2.400000) rectangle (2.400000, 2.100000);
\draw[draw, fill=black!44] (2.100000,2.100000) rectangle (2.400000, 1.800000);
\draw[draw, fill=black!44] (2.100000,1.800000) rectangle (2.400000, 1.500000);
\draw[draw, fill=black!56] (2.100000,1.500000) rectangle (2.400000, 1.200000);
\draw[draw, fill=black!44] (2.100000,1.200000) rectangle (2.400000, 0.900000);
\draw[draw, fill=black!56] (2.100000,0.900000) rectangle (2.400000, 0.600000);
\draw[draw, fill=black!56] (2.100000,0.600000) rectangle (2.400000, 0.300000);
\draw[draw, fill=black!72] (2.100000,0.300000) rectangle (2.400000, 0.000000);
      \draw[red, draw, thick] (1.2,0) rectangle (1.8, 0.6);
      \draw (1.5, 0) edge[red, thick] (1.5, 0.6);
      \draw (1.2, 0.3) edge[red, thick] (1.8, 0.3);
      \draw[red] (1.35, 0.45) node {{\tiny 0.4}};
      \draw[red] (1.65, 0.45) node {{\tiny 0.7}};
      \draw[red] (1.35, 0.15) node {{\tiny 0.7}};
      \draw[blue] (1.65, 0.15) node {{\tiny 0.9}};
      \draw[below] (1.25, 0) node{{\small (d)}};
    \end{tikzpicture}
  \end{center}
  \caption{\emph{(Best viewed in color)} (a) Edge probability matrix $P$
    of a KPGM with parameter $\Theta = (0.4, 0.7; 0.7, 0.9)$ and $d$ =
    3. Darker cells imply a higher probability of observing an edge. (b)
    To locate the position of an edge, the matrix is divided into four
    quadrants, and one of them is chosen randomly with probability
    proportional to the weight given by the $\Theta$ matrix. Here, the
    fourth quadrant is chosen. (c) and (d) The above process continues
    recursively and finally a location in the $8 \times 8$ grid is
    determined for placing an edge. Here nodes 8 and 6 are connected.}
  \label{fig:ball_drop}
\end{figure}
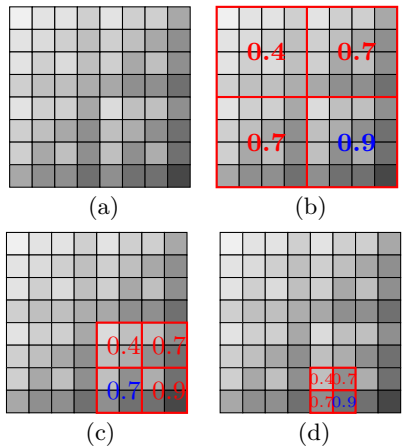
If a graph is sampled from the above process, however, there is a
nonzero probability that the same pair of nodes is sampled multiple
times. Therefore, the process generates multi-graphs while the sample
space of KPGM is simple graphs. The above generative process is called a
\emph{ball-dropping process} (BDP), in order to distinguish it from the
KPGM distribution. Of course, the two are closely related. We show the
following theorem which characterizes the distribution of BDP and
clarifies the connection between the two.
\begin{theorem}[Distribution of BDP] If a multi-graph $G$ is sampled
  from a BDP with parameters $\Thetat$ and $d$, then $A_{ij}$ follows an
  independent Poisson distribution with rate parameter $\Gamma_{ij}$ defined
  by \eqref{eq:kpgm_p2}.
  \label{thm:bdp}
\end{theorem}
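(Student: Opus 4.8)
The plan is to recognize Theorem~\ref{thm:bdp} as an instance of the classical \emph{Poissonization} (Poisson splitting/coloring) phenomenon: if one draws a $\Poisson(\lambda)$ number of i.i.d.\ points and sorts them into finitely many bins according to a fixed probability vector, then the bin counts are \emph{mutually independent} Poisson variables, the $c$-th having rate $\lambda$ times the probability of bin $c$. I would carry this out in two steps — first identifying the per-ball landing distribution, then computing the joint law of $(A_{ij})$ directly.

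First I would pin down where a single ball lands. The divide-and-conquer placement of \citet{LesChaKleFaletal10} drops a ball into the $2^d\times 2^d$ grid by recursively choosing, at level $k$, one of the four quadrants with probability proportional to the total $\Gamma$-weight it contains. Because $\Gamma=\Theta^{(1)}\otimes\cdots\otimes\Theta^{(d)}$ factorizes, the aggregate weight of the quadrant indexed by a bit-pair $(a,b)$ at level $k$ equals $\theta^{(k)}_{ab}$ times a constant (the product of the entry-sums of the remaining initiator matrices); hence the level-$k$ choice has probability $\theta^{(k)}_{ab}\big/\sum_{a',b'}\theta^{(k)}_{a'b'}$. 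Multiplying these independent choices over $k=1,\dots,d$ and invoking \eqref{eq:kpgm_prob} and \eqref{eq:skg_edgenum} shows that the ball lands at $(i,j)$ with probability $p_{ij} := \Gamma_{ij}/e_K$, and $\sum_{i,j} p_{ij}=1$. (Alternatively this step can simply be cited from \citet{LesChaKleFaletal10}.)

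Second I would compute the joint p.m.f.\ of $(A_{ij})_{i,j}$. Conditioned on $\abr{\Ecal}=m$, the array $(A_{ij})$ is multinomial with $m$ trials and cell probabilities $(p_{ij})$, supported on tuples whose entries sum to $m$; the conditional probability is $0$ for any other total. Thus, marginalizing over $\abr{\Ecal}\sim\Poisson(e_K)$, for any nonnegative integer array $(a_{ij})$ with $m:=\sum_{i,j}a_{ij}$,
\begin{align*}
  \PP\sbr{A_{ij}=a_{ij}\ \forall\, i,j}
  &= e^{-e_K}\,\frac{e_K^{\,m}}{m!}\cdot \frac{m!}{\prod_{i,j} a_{ij}!}\,\prod_{i,j} p_{ij}^{\,a_{ij}}
  = \prod_{i,j} e^{-\Gamma_{ij}}\,\frac{\Gamma_{ij}^{\,a_{ij}}}{a_{ij}!},
\end{align*}
where the last equality uses $e_K=\sum_{i,j}\Gamma_{ij}$, so that $e^{-e_K}=\prod_{i,j}e^{-\Gamma_{ij}}$ and $e_K^{\,m}\prod_{i,j}p_{ij}^{\,a_{ij}}=\prod_{i,j}\Gamma_{ij}^{\,a_{ij}}$. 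Since the joint p.m.f.\ factors into a product of $\Poisson(\Gamma_{ij})$ p.m.f.'s, the $A_{ij}$ are independent Poisson variables with the stated rates.

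The only care points are bookkeeping rather than depth: (i) verifying that the recursive placement genuinely realizes $p_{ij}\propto\Gamma_{ij}$, which is exactly where the Kronecker/multiplicative structure enters; and (ii) phrasing the conclusion as \emph{joint} independence over all $2^d\times 2^d$ entries, which the factorization above delivers directly. I expect the small algebraic cancellation in (ii) — $e_K^{\,m}\prod p_{ij}^{\,a_{ij}}=\prod\Gamma_{ij}^{\,a_{ij}}$ combined with $e^{-e_K}=\prod e^{-\Gamma_{ij}}$ — to be the crux of the argument, even though it is short.
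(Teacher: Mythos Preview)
Your proposal is correct and follows essentially the same route as the paper: condition on $\abr{\Ecal}$ to get a Poisson--multinomial decomposition, multiply out, and invoke factorization. Your step (i) verifying that a single ball lands at $(i,j)$ with probability $\Gamma_{ij}/e_K$ is slightly more explicit than the paper, which simply takes this as the definition of the process, but otherwise the arguments are the same.
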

\begin{proof}
  See Appendix \ref{sec:ProofTheorrefthm:bdp}. 
\end{proof}
Recall that in the KPGM, each $A_{ij}$ is drawn from an independent
\emph{Bernoulli} distribution, instead of a Poisson distribution. When
the expectation of Bernoulli distribution is close to zero, it is
well-known that the Poisson distribution is a very good approximation to
the Bernoulli distribution (see e.g., Chapter 1.8,
\citet{Dasgupta2011}). To elaborate this point, suppose that a random
variable $X$ follows a Poisson distribution with rate parameter $p$,
while $Y$ follows a Bernoulli distribution with the same parameter $p$.
Then, using the Taylor expansion
\begin{align*}
  \PP\sbr{X = 0} &= \exp(-p) = (1 - p) + O(p^2) \\&= \PP\sbr{Y = 0} + O(p^2),
\end{align*}
In practice we are interested in large sparse graphs, therefore most
$\Gamma_{ij}$ values are close to zero, and the Poisson distribution
provides a good approximation. In fact, this property of the Poisson
distribution is often used in statistical modeling of sparse graphs to
make both analysis tractable and computation more efficient (see \eg
\citet{KarNew11}).

\subsection{Two Observations}
\label{sec:TwoObservations}

Note that $\exp(-p) \geq 1-p$ and consequently the probability of an edge
not being sampled is higher in the BDP than in the KPGM. Consequently,
the BDP generates \emph{sparser} graphs than exact sampling from
KPGM. \citet{LesChaKleFaletal10} observed this and recommend sampling
extra edges to compensate for this effect. Our analysis shows why this
phenomenon occurs.

As the BDP is characterized by a Poisson distribution instead of the
Bernoulli, it only requires non-negativity of its parameters. Therefore,
for a BDP we do not need to enforce the constraint that every
$\theta_{ab}^{(k)}$ parameter is bounded by 1. This extra bit of
generality will be found useful in the next section.

\section{Sampling Algorithm}
\label{sec:algorithm}

In the MAGM, each entry $A_{ij}$ of the adjacency matrix $A$ follows a
Bernoulli distribution with parameter $\Psi_{ij}$.  To efficiently sample
graphs from the model, again we approximate $A_{ij}$ by a Poisson
distribution with the same parameter $\Psi_{ij}$, as discussed in
Section~\ref{sec:bdp}.

A close examination of \eqref{eq:kpgm_prob} and \eqref{eq:mag_prob}
reveals that KPGM and MAGM are very related. The only difference is that
in the case of KPGM the $i$-th node is mapped to the bit vector
corresponding to $(i-1)$ while in the case of MAGM it is mapped to an
integer $c_i$ (not necessarily $(i-1)$) whose bit vector representation
is $f(i)$. We will call $c_i$ the \emph{color} of node $i$ in the
sequel\footnote{\citet{YunVis12} call it \emph{attribute configuration},
  but in our setting we think \emph{color} conveys the idea
  better.}. The concept of color clarifies the connection between KPGM
and MAGM through the following equality
\begin{align}
  \Psi_{ij} = \Gamma_{c_i c_j}.
  \label{eq:q_and_p}
\end{align}

\subsection{Problem Transformation}
\label{ssec:prob_trans}

Let $\Vcal_c$ be the set of nodes with color $0 \leq c \leq n-1$
\begin{align}
  \Vcal_c := \cbr{ i : c_i = c }.
\end{align}
Instead of sampling the adjacency matrix $A$ directly, we will first
generate another matrix $B$, with $B_{cc'}$ defined as
\begin{align}
  B_{cc'} := \sum_{i \in \Vcal_c} \sum_{j \in \Vcal_{c'}} A_{ij}.
  \label{eq:b_cc_def}
\end{align}
\begin{figure}
  \begin{center}
    \begin{tikzpicture}[scale=1.0]
      \draw[draw, fill=black!0] (0.000000,2.400000) rectangle (0.300000, 2.100000);
\draw[draw, fill=black!0] (0.000000,2.100000) rectangle (0.300000, 1.800000);
\draw[draw, fill=black!0] (0.000000,1.800000) rectangle (0.300000, 1.500000);
\draw[draw, fill=black!0] (0.000000,1.500000) rectangle (0.300000, 1.200000);
\draw[draw, fill=black!0] (0.000000,1.200000) rectangle (0.300000, 0.900000);
\draw[draw, fill=black!0] (0.000000,0.900000) rectangle (0.300000, 0.600000);
\draw[draw, fill=black!0] (0.000000,0.600000) rectangle (0.300000, 0.300000);
\draw[draw, fill=black!0] (0.000000,0.300000) rectangle (0.300000, 0.000000);
\draw[draw, fill=black!0] (0.300000,2.400000) rectangle (0.600000, 2.100000);
\draw[draw, fill=black!2] (0.300000,2.100000) rectangle (0.600000, 1.800000);
\draw[draw, fill=black!0] (0.300000,1.800000) rectangle (0.600000, 1.500000);
\draw[draw, fill=black!0] (0.300000,1.500000) rectangle (0.600000, 1.200000);
\draw[draw, fill=black!5] (0.300000,1.200000) rectangle (0.600000, 0.900000);
\draw[draw, fill=black!2] (0.300000,0.900000) rectangle (0.600000, 0.600000);
\draw[draw, fill=black!3] (0.300000,0.600000) rectangle (0.600000, 0.300000);
\draw[draw, fill=black!9] (0.300000,0.300000) rectangle (0.600000, 0.000000);
\draw[draw, fill=black!0] (0.600000,2.400000) rectangle (0.900000, 2.100000);
\draw[draw, fill=black!0] (0.600000,2.100000) rectangle (0.900000, 1.800000);
\draw[draw, fill=black!0] (0.600000,1.800000) rectangle (0.900000, 1.500000);
\draw[draw, fill=black!0] (0.600000,1.500000) rectangle (0.900000, 1.200000);
\draw[draw, fill=black!0] (0.600000,1.200000) rectangle (0.900000, 0.900000);
\draw[draw, fill=black!0] (0.600000,0.900000) rectangle (0.900000, 0.600000);
\draw[draw, fill=black!0] (0.600000,0.600000) rectangle (0.900000, 0.300000);
\draw[draw, fill=black!0] (0.600000,0.300000) rectangle (0.900000, 0.000000);
\draw[draw, fill=black!0] (0.900000,2.400000) rectangle (1.200000, 2.100000);
\draw[draw, fill=black!0] (0.900000,2.100000) rectangle (1.200000, 1.800000);
\draw[draw, fill=black!0] (0.900000,1.800000) rectangle (1.200000, 1.500000);
\draw[draw, fill=black!0] (0.900000,1.500000) rectangle (1.200000, 1.200000);
\draw[draw, fill=black!0] (0.900000,1.200000) rectangle (1.200000, 0.900000);
\draw[draw, fill=black!0] (0.900000,0.900000) rectangle (1.200000, 0.600000);
\draw[draw, fill=black!0] (0.900000,0.600000) rectangle (1.200000, 0.300000);
\draw[draw, fill=black!0] (0.900000,0.300000) rectangle (1.200000, 0.000000);
\draw[draw, fill=black!0] (1.200000,2.400000) rectangle (1.500000, 2.100000);
\draw[draw, fill=black!5] (1.200000,2.100000) rectangle (1.500000, 1.800000);
\draw[draw, fill=black!0] (1.200000,1.800000) rectangle (1.500000, 1.500000);
\draw[draw, fill=black!0] (1.200000,1.500000) rectangle (1.500000, 1.200000);
\draw[draw, fill=black!8] (1.200000,1.200000) rectangle (1.500000, 0.900000);
\draw[draw, fill=black!5] (1.200000,0.900000) rectangle (1.500000, 0.600000);
\draw[draw, fill=black!5] (1.200000,0.600000) rectangle (1.500000, 0.300000);
\draw[draw, fill=black!19] (1.200000,0.300000) rectangle (1.500000, 0.000000);
\draw[draw, fill=black!0] (1.500000,2.400000) rectangle (1.800000, 2.100000);
\draw[draw, fill=black!2] (1.500000,2.100000) rectangle (1.800000, 1.800000);
\draw[draw, fill=black!0] (1.500000,1.800000) rectangle (1.800000, 1.500000);
\draw[draw, fill=black!0] (1.500000,1.500000) rectangle (1.800000, 1.200000);
\draw[draw, fill=black!5] (1.500000,1.200000) rectangle (1.800000, 0.900000);
\draw[draw, fill=black!2] (1.500000,0.900000) rectangle (1.800000, 0.600000);
\draw[draw, fill=black!3] (1.500000,0.600000) rectangle (1.800000, 0.300000);
\draw[draw, fill=black!10] (1.500000,0.300000) rectangle (1.800000, 0.000000);
\draw[draw, fill=black!0] (1.800000,2.400000) rectangle (2.100000, 2.100000);
\draw[draw, fill=black!3] (1.800000,2.100000) rectangle (2.100000, 1.800000);
\draw[draw, fill=black!0] (1.800000,1.800000) rectangle (2.100000, 1.500000);
\draw[draw, fill=black!0] (1.800000,1.500000) rectangle (2.100000, 1.200000);
\draw[draw, fill=black!5] (1.800000,1.200000) rectangle (2.100000, 0.900000);
\draw[draw, fill=black!3] (1.800000,0.900000) rectangle (2.100000, 0.600000);
\draw[draw, fill=black!2] (1.800000,0.600000) rectangle (2.100000, 0.300000);
\draw[draw, fill=black!10] (1.800000,0.300000) rectangle (2.100000, 0.000000);
\draw[draw, fill=black!0] (2.100000,2.400000) rectangle (2.400000, 2.100000);
\draw[draw, fill=black!9] (2.100000,2.100000) rectangle (2.400000, 1.800000);
\draw[draw, fill=black!0] (2.100000,1.800000) rectangle (2.400000, 1.500000);
\draw[draw, fill=black!0] (2.100000,1.500000) rectangle (2.400000, 1.200000);
\draw[draw, fill=black!19] (2.100000,1.200000) rectangle (2.400000, 0.900000);
\draw[draw, fill=black!10] (2.100000,0.900000) rectangle (2.400000, 0.600000);
\draw[draw, fill=black!10] (2.100000,0.600000) rectangle (2.400000, 0.300000);
\draw[draw, fill=black!32] (2.100000,0.300000) rectangle (2.400000, 0.000000);
      \draw[below] (1.25, 0) node{{(a) $\Lambda$}};
    \end{tikzpicture}
    \;
    \begin{tikzpicture}[scale=1.0]
      \draw[draw, fill=black!8] (0.000000,2.400000) rectangle (0.300000, 2.100000);
\draw[draw, fill=black!11] (0.000000,2.100000) rectangle (0.300000, 1.800000);
\draw[draw, fill=black!11] (0.000000,1.800000) rectangle (0.300000, 1.500000);
\draw[draw, fill=black!18] (0.000000,1.500000) rectangle (0.300000, 1.200000);
\draw[draw, fill=black!11] (0.000000,1.200000) rectangle (0.300000, 0.900000);
\draw[draw, fill=black!18] (0.000000,0.900000) rectangle (0.300000, 0.600000);
\draw[draw, fill=black!18] (0.000000,0.600000) rectangle (0.300000, 0.300000);
\draw[draw, fill=black!34] (0.000000,0.300000) rectangle (0.300000, 0.000000);
\draw[draw, fill=black!11] (0.300000,2.400000) rectangle (0.600000, 2.100000);
\draw[draw, fill=black!14] (0.300000,2.100000) rectangle (0.600000, 1.800000);
\draw[draw, fill=black!16] (0.300000,1.800000) rectangle (0.600000, 1.500000);
\draw[draw, fill=black!22] (0.300000,1.500000) rectangle (0.600000, 1.200000);
\draw[draw, fill=black!16] (0.300000,1.200000) rectangle (0.600000, 0.900000);
\draw[draw, fill=black!22] (0.300000,0.900000) rectangle (0.600000, 0.600000);
\draw[draw, fill=black!25] (0.300000,0.600000) rectangle (0.600000, 0.300000);
\draw[draw, fill=black!41] (0.300000,0.300000) rectangle (0.600000, 0.000000);
\draw[draw, fill=black!11] (0.600000,2.400000) rectangle (0.900000, 2.100000);
\draw[draw, fill=black!16] (0.600000,2.100000) rectangle (0.900000, 1.800000);
\draw[draw, fill=black!14] (0.600000,1.800000) rectangle (0.900000, 1.500000);
\draw[draw, fill=black!22] (0.600000,1.500000) rectangle (0.900000, 1.200000);
\draw[draw, fill=black!16] (0.600000,1.200000) rectangle (0.900000, 0.900000);
\draw[draw, fill=black!25] (0.600000,0.900000) rectangle (0.900000, 0.600000);
\draw[draw, fill=black!22] (0.600000,0.600000) rectangle (0.900000, 0.300000);
\draw[draw, fill=black!41] (0.600000,0.300000) rectangle (0.900000, 0.000000);
\draw[draw, fill=black!18] (0.900000,2.400000) rectangle (1.200000, 2.100000);
\draw[draw, fill=black!22] (0.900000,2.100000) rectangle (1.200000, 1.800000);
\draw[draw, fill=black!22] (0.900000,1.800000) rectangle (1.200000, 1.500000);
\draw[draw, fill=black!30] (0.900000,1.500000) rectangle (1.200000, 1.200000);
\draw[draw, fill=black!25] (0.900000,1.200000) rectangle (1.200000, 0.900000);
\draw[draw, fill=black!35] (0.900000,0.900000) rectangle (1.200000, 0.600000);
\draw[draw, fill=black!35] (0.900000,0.600000) rectangle (1.200000, 0.300000);
\draw[draw, fill=black!56] (0.900000,0.300000) rectangle (1.200000, 0.000000);
\draw[draw, fill=black!11] (1.200000,2.400000) rectangle (1.500000, 2.100000);
\draw[draw, fill=black!16] (1.200000,2.100000) rectangle (1.500000, 1.800000);
\draw[draw, fill=black!16] (1.200000,1.800000) rectangle (1.500000, 1.500000);
\draw[draw, fill=black!25] (1.200000,1.500000) rectangle (1.500000, 1.200000);
\draw[draw, fill=black!14] (1.200000,1.200000) rectangle (1.500000, 0.900000);
\draw[draw, fill=black!22] (1.200000,0.900000) rectangle (1.500000, 0.600000);
\draw[draw, fill=black!22] (1.200000,0.600000) rectangle (1.500000, 0.300000);
\draw[draw, fill=black!41] (1.200000,0.300000) rectangle (1.500000, 0.000000);
\draw[draw, fill=black!18] (1.500000,2.400000) rectangle (1.800000, 2.100000);
\draw[draw, fill=black!22] (1.500000,2.100000) rectangle (1.800000, 1.800000);
\draw[draw, fill=black!25] (1.500000,1.800000) rectangle (1.800000, 1.500000);
\draw[draw, fill=black!35] (1.500000,1.500000) rectangle (1.800000, 1.200000);
\draw[draw, fill=black!22] (1.500000,1.200000) rectangle (1.800000, 0.900000);
\draw[draw, fill=black!30] (1.500000,0.900000) rectangle (1.800000, 0.600000);
\draw[draw, fill=black!35] (1.500000,0.600000) rectangle (1.800000, 0.300000);
\draw[draw, fill=black!56] (1.500000,0.300000) rectangle (1.800000, 0.000000);
\draw[draw, fill=black!18] (1.800000,2.400000) rectangle (2.100000, 2.100000);
\draw[draw, fill=black!25] (1.800000,2.100000) rectangle (2.100000, 1.800000);
\draw[draw, fill=black!22] (1.800000,1.800000) rectangle (2.100000, 1.500000);
\draw[draw, fill=black!35] (1.800000,1.500000) rectangle (2.100000, 1.200000);
\draw[draw, fill=black!22] (1.800000,1.200000) rectangle (2.100000, 0.900000);
\draw[draw, fill=black!35] (1.800000,0.900000) rectangle (2.100000, 0.600000);
\draw[draw, fill=black!30] (1.800000,0.600000) rectangle (2.100000, 0.300000);
\draw[draw, fill=black!56] (1.800000,0.300000) rectangle (2.100000, 0.000000);
\draw[draw, fill=black!34] (2.100000,2.400000) rectangle (2.400000, 2.100000);
\draw[draw, fill=black!41] (2.100000,2.100000) rectangle (2.400000, 1.800000);
\draw[draw, fill=black!41] (2.100000,1.800000) rectangle (2.400000, 1.500000);
\draw[draw, fill=black!56] (2.100000,1.500000) rectangle (2.400000, 1.200000);
\draw[draw, fill=black!41] (2.100000,1.200000) rectangle (2.400000, 0.900000);
\draw[draw, fill=black!56] (2.100000,0.900000) rectangle (2.400000, 0.600000);
\draw[draw, fill=black!56] (2.100000,0.600000) rectangle (2.400000, 0.300000);
\draw[draw, fill=black!91] (2.100000,0.300000) rectangle (2.400000, 0.000000);
      \draw[below] (1.25, 0) node{{(b) $\Lambda'$}};
    \end{tikzpicture}
    \;
    \begin{tikzpicture}[scale=1.0]
      \draw[draw, fill=black!0] (0.000000,2.400000) rectangle (0.300000, 2.100000);
\draw[draw, fill=black!0] (0.000000,2.100000) rectangle (0.300000, 1.800000);
\draw[draw, fill=black!0] (0.000000,1.800000) rectangle (0.300000, 1.500000);
\draw[draw, fill=black!0] (0.000000,1.500000) rectangle (0.300000, 1.200000);
\draw[draw, fill=black!0] (0.000000,1.200000) rectangle (0.300000, 0.900000);
\draw[draw, fill=black!0] (0.000000,0.900000) rectangle (0.300000, 0.600000);
\draw[draw, fill=black!0] (0.000000,0.600000) rectangle (0.300000, 0.300000);
\draw[draw, fill=black!0] (0.000000,0.300000) rectangle (0.300000, 0.000000);
\draw[draw, fill=black!0] (0.300000,2.400000) rectangle (0.600000, 2.100000);
\draw[draw, fill=black!15] (0.300000,2.100000) rectangle (0.600000, 1.800000);
\draw[draw, fill=black!0] (0.300000,1.800000) rectangle (0.600000, 1.500000);
\draw[draw, fill=black!0] (0.300000,1.500000) rectangle (0.600000, 1.200000);
\draw[draw, fill=black!30] (0.300000,1.200000) rectangle (0.600000, 0.900000);
\draw[draw, fill=black!11] (0.300000,0.900000) rectangle (0.600000, 0.600000);
\draw[draw, fill=black!11] (0.300000,0.600000) rectangle (0.600000, 0.300000);
\draw[draw, fill=black!23] (0.300000,0.300000) rectangle (0.600000, 0.000000);
\draw[draw, fill=black!0] (0.600000,2.400000) rectangle (0.900000, 2.100000);
\draw[draw, fill=black!0] (0.600000,2.100000) rectangle (0.900000, 1.800000);
\draw[draw, fill=black!0] (0.600000,1.800000) rectangle (0.900000, 1.500000);
\draw[draw, fill=black!0] (0.600000,1.500000) rectangle (0.900000, 1.200000);
\draw[draw, fill=black!0] (0.600000,1.200000) rectangle (0.900000, 0.900000);
\draw[draw, fill=black!0] (0.600000,0.900000) rectangle (0.900000, 0.600000);
\draw[draw, fill=black!0] (0.600000,0.600000) rectangle (0.900000, 0.300000);
\draw[draw, fill=black!0] (0.600000,0.300000) rectangle (0.900000, 0.000000);
\draw[draw, fill=black!0] (0.900000,2.400000) rectangle (1.200000, 2.100000);
\draw[draw, fill=black!0] (0.900000,2.100000) rectangle (1.200000, 1.800000);
\draw[draw, fill=black!0] (0.900000,1.800000) rectangle (1.200000, 1.500000);
\draw[draw, fill=black!0] (0.900000,1.500000) rectangle (1.200000, 1.200000);
\draw[draw, fill=black!0] (0.900000,1.200000) rectangle (1.200000, 0.900000);
\draw[draw, fill=black!0] (0.900000,0.900000) rectangle (1.200000, 0.600000);
\draw[draw, fill=black!0] (0.900000,0.600000) rectangle (1.200000, 0.300000);
\draw[draw, fill=black!0] (0.900000,0.300000) rectangle (1.200000, 0.000000);
\draw[draw, fill=black!0] (1.200000,2.400000) rectangle (1.500000, 2.100000);
\draw[draw, fill=black!30] (1.200000,2.100000) rectangle (1.500000, 1.800000);
\draw[draw, fill=black!0] (1.200000,1.800000) rectangle (1.500000, 1.500000);
\draw[draw, fill=black!0] (1.200000,1.500000) rectangle (1.500000, 1.200000);
\draw[draw, fill=black!61] (1.200000,1.200000) rectangle (1.500000, 0.900000);
\draw[draw, fill=black!23] (1.200000,0.900000) rectangle (1.500000, 0.600000);
\draw[draw, fill=black!23] (1.200000,0.600000) rectangle (1.500000, 0.300000);
\draw[draw, fill=black!47] (1.200000,0.300000) rectangle (1.500000, 0.000000);
\draw[draw, fill=black!0] (1.500000,2.400000) rectangle (1.800000, 2.100000);
\draw[draw, fill=black!11] (1.500000,2.100000) rectangle (1.800000, 1.800000);
\draw[draw, fill=black!0] (1.500000,1.800000) rectangle (1.800000, 1.500000);
\draw[draw, fill=black!0] (1.500000,1.500000) rectangle (1.800000, 1.200000);
\draw[draw, fill=black!23] (1.500000,1.200000) rectangle (1.800000, 0.900000);
\draw[draw, fill=black!9] (1.500000,0.900000) rectangle (1.800000, 0.600000);
\draw[draw, fill=black!9] (1.500000,0.600000) rectangle (1.800000, 0.300000);
\draw[draw, fill=black!18] (1.500000,0.300000) rectangle (1.800000, 0.000000);
\draw[draw, fill=black!0] (1.800000,2.400000) rectangle (2.100000, 2.100000);
\draw[draw, fill=black!11] (1.800000,2.100000) rectangle (2.100000, 1.800000);
\draw[draw, fill=black!0] (1.800000,1.800000) rectangle (2.100000, 1.500000);
\draw[draw, fill=black!0] (1.800000,1.500000) rectangle (2.100000, 1.200000);
\draw[draw, fill=black!23] (1.800000,1.200000) rectangle (2.100000, 0.900000);
\draw[draw, fill=black!9] (1.800000,0.900000) rectangle (2.100000, 0.600000);
\draw[draw, fill=black!9] (1.800000,0.600000) rectangle (2.100000, 0.300000);
\draw[draw, fill=black!18] (1.800000,0.300000) rectangle (2.100000, 0.000000);
\draw[draw, fill=black!0] (2.100000,2.400000) rectangle (2.400000, 2.100000);
\draw[draw, fill=black!23] (2.100000,2.100000) rectangle (2.400000, 1.800000);
\draw[draw, fill=black!0] (2.100000,1.800000) rectangle (2.400000, 1.500000);
\draw[draw, fill=black!0] (2.100000,1.500000) rectangle (2.400000, 1.200000);
\draw[draw, fill=black!47] (2.100000,1.200000) rectangle (2.400000, 0.900000);
\draw[draw, fill=black!18] (2.100000,0.900000) rectangle (2.400000, 0.600000);
\draw[draw, fill=black!18] (2.100000,0.600000) rectangle (2.400000, 0.300000);
\draw[draw, fill=black!35] (2.100000,0.300000) rectangle (2.400000, 0.000000);
      \draw[below] (1.25, 0) node{{(c) $\Lambda \oslash \Lambda'$}};
    \end{tikzpicture}

  \end{center}
  \caption{(a) Poisson parameter matrix $\Lambda$ of target distribution
    $B$. (b) Parameter matrix $\Lambda'$ of proposal distribution
    $B'$. Each entry of $\Lambda'$ must be higher than the corresponding
    entry in $\Lambda$ for $B'$ to be a valid proposal.  (c) The
    acceptance ratio is obtained by Hadamard (element-wise) division of
    $\Lambda$ by $\Lambda'$. The acceptance ratio is high when the gap
    between $\Lambda$ and $\Lambda'$ is small. In all three figures
    darker cells imply higher values and a white cell denotes a zero
    value. Parameters $\Theta = (0.7, 0.85; 0.85, 0.9)$, $d=3$ and
    $\mu=0.7$ was used for these plots. }
  \label{fig:acc_ratio}
\end{figure}
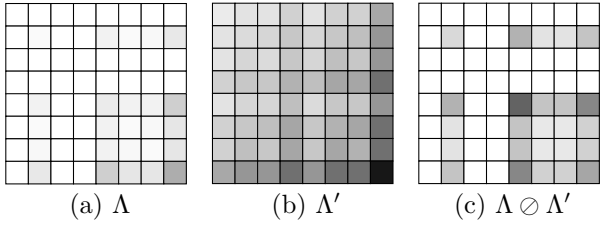
In other words, $B_{cc'}$ is the number of edges from nodes with color
$c$ to nodes with color $c'$. It is easy to verify that each $B_{cc'}$
is a sum of Poisson random variables and hence also follows Poisson
distribution (Chapter 13, \citet{Dasgupta2011}). Let $\Lambda_{cc'}$ be
the rate parameter of the Poisson distribution in $B_{cc'}$, which can
be calculated from \eqref{eq:q_and_p} and \eqref{eq:b_cc_def}
\begin{align}
  \Lambda_{cc'} = \abr{\Vcal_c} \cdot \abr{\Vcal_{c'}} \cdot \Gamma_{cc'}.
  \label{eq:r_cc_def}
\end{align}
Given matrix $B$, it is easy to sample the adjacency matrix $A$.
Uniformly sampling $B_{cc'}$-number of $(i,j)$ pairs in $\Vcal_c \times
\Vcal_{c'}$ for each nonzero entry $B_{cc'}$ of $B$, and incrementing
$A_{ij}$ by 1 for each sampled pair will sample $A$ conditioned on
$B$. An argument similar to the proof of Theorem~\ref{thm:bdp} can be
used to show the validity of such an operation.

That said, now the question is how to efficiently sample $B$. In
Section~\ref{ssec:construction}, we will efficiently construct another
$n \times n$ random matrix $B'$, with $B'_{cc'}$ following an
independent Poisson distribution with parameter $\Lambda'_{cc'}$. $B'$
will \emph{bound} $B$, in the sense that for any $c$ and $c'$ (see
Figure~\ref{fig:acc_ratio}),
\begin{align}
  \Lambda_{cc'} \leq \Lambda'_{cc'}.
  \label{eq:bound_cond}
\end{align}

For each nonzero value of $B'_{cc'}$, sampling from the
binomial distribution of size $B'_{cc'}$ with parameter
$\frac{\Lambda_{cc'}}{\Lambda'_{cc'}}$ will generate a valid $B_{cc'}$. As a
filtered Poisson process is still a Poisson process with an adjusted
parameter value, this step remains valid (Chapter 13.3,
\citet{Dasgupta2011}).

To summarize, we will first generate $B'$, use $B'$ to sample $B$, and
then convert $B$ to $A$. The time complexity of the algorithm is
dominated by the generation of $B'$. See Algorithm~\ref{alg:bdp_sample}
of Appendix~\ref{sec:pseudo_code} for the pseudo-code.

Note that the relation between $B$ and $B'$ is similar to that between
\emph{target} and \emph{proposal} distribution in accept-reject
sampling. While $B$ is the target distribution we \emph{want} to sample,
we first generate a \emph{proposal} $B'$ and correct each entry
$B'_{cc'}$ using acceptance ratio
$\frac{\Lambda_{cc'}}{\Lambda'_{cc'}}$. Just like it is important to
find a good proposal distribution which \emph{compactly bounds} the
target distribution in accept-reject sampling, we need $B'$ which
\emph{compactly bounds} $B$. The remainder of this section is devoted to
show how this can be done.

\subsection{Simple Illustrative Proposal}

To illustrate the idea behind our construction of $B'$, let us first
construct a simple but non-optimal proposal. Let $m$ be the maximum
number of nodes with the same color
\begin{align}
  m := \max_{0 \leq c \leq n-1} \abr{\Vcal_c}.
  \label{eq:m_def}
\end{align}
Using the notation in \eqref{eq:thetat-def}, if one generates a random
matrix $B'$ from BDP with the parameter $\Thetat'$ with each component
$\Theta'^{(k)}$ defined as
\begin{align}
  \Theta'^{(k)} := \rbr{m}^{2/d} \mymatrix{cc}{
    \theta_{00}^{(k)} & \theta_{01}^{(k)} \\
    \theta_{10}^{(k)} & \theta_{11}^{(k)} },
\end{align}
then, by calculation we have $\Lambda'_{cc'} = m^2 \Gamma_{cc'}$. From
definition \eqref{eq:r_cc_def} and \eqref{eq:m_def}, it is obvious that
\eqref{eq:bound_cond} holds
\begin{align}
  \Lambda_{cc'} = \abr{\Vcal_c} \cdot \abr{\Vcal_{c'}} \cdot \Gamma_{cc'} \leq m^2
  \cdot \Gamma_{cc'} = \Lambda'_{cc'},
\end{align}
and hence $B'$ is a \emph{valid} proposal for $B$.

We now investigate the time complexity of sampling $B'$. Since BDP
with parameter $\Thetat$ generates $e_K$ number of edges in
expectation, $B'$ will generate $m^2 \cdot e_K$ edges in
expectation because its BDP parameter is $\Thetat' =
m^{2/d}\Thetat$. As sampling each edge takes $O(d)$ time, the overall
time complexity is $O\rbr{d \cdot m^2 \cdot e_K}$.

If $\mu^{(1)} = \mu^{(2)} = \cdots = \mu^{(d)} = 0.5$ and $n = 2^d$,
\citet{YunVis12} showed that $m \leq \log_2 n$ with high
probability. Therefore, the overall time complexity of sampling is
$O\rbr{d \cdot \rbr{\log_2 n}^2 \cdot e_K}$.

Roughly speaking, the quilting algorithm of \citet{YunVis12} always uses
the same $B'$ irrespective of $\mu^{(k)}$'s. When $\mu^{(k)}$'s are not
exactly equal to 0.5, $m$ is no longer bounded by $\log_2 n$. To resolve
this problem \citet{YunVis12} suggest some heuristics. Instead, we
construct a more careful proposal which adapts to values of $\mu^{(k)}$.

\subsection{Partitioning Colors}

To develop a better proposal $B'$, we define quantities similar to $m$
but bounded by $\log_2 n$ with high probability for general $\mu^{(k)}$
values. To do this, we first partition colors into a set of
\emph{frequent} colors $\Fcal$ and \emph{infrequent} colors $\Ical$
\begin{align}
  \Fcal &:= \cbr{ c : \EE\sbr{\abr{\Vcal_c}} \geq 1}, \\
  \Ical &:= \cbr{ c : \EE\sbr{\abr{\Vcal_c} < 1}} = \cbr{ 0, \ldots, n-1 }
  \backslash \Fcal.
\end{align}
The rational behind this partitioning is as follows: When
$\EE\sbr{\abr{\Vcal_c}} \geq 1$, the variance is smaller than that of
the mean thus $Var\sbr{\abr{\Vcal_c}} \leq \EE\sbr{\abr{\Vcal_c}}$. On
the other hand, when $\EE\abr{V_c} < 1$, then the variance is greater
than that of the mean thus $Var\sbr{\abr{\Vcal_c}} >
\EE\sbr{\abr{\Vcal_c}}$. Therefore, the frequencies of colors in $\Fcal$
and those in $\Ical$ behave very differently, and we need to account for
this. We define
\begin{align}
  m_{\Fcal} &:= \max_{c \in \Fcal}
  \frac{\abr{\Vcal_c}}{\EE\sbr{\abr{\Vcal_c}}}, \;\;\;
  m_{\Ical} := \max_{c \in \Ical} \abr{\Vcal_c} \label{eq:def_m_f}.
\end{align}
\begin{theorem}[Bound of Color Frequencies] With high probability,
  $m_{\Fcal}, m_{\Ical} \leq \log_2 n$.
  \label{thm:color_bound}
\end{theorem}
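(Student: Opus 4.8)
The plan is to handle $m_{\Fcal}$ and $m_{\Ical}$ separately by a union bound over colors, using a single elementary tail estimate for one color. First I would record the distributional facts. Node $i$ receives color $c$ with probability $p_c := \prod_{k=1}^{d}\rbr{\mu^{(k)}}^{b_k(c)}\rbr{1-\mu^{(k)}}^{1-b_k(c)}$, independently across nodes, so $\abr{\Vcal_c}\sim\mathrm{Binomial}(n,p_c)$ with $\lambda_c := \EE\sbr{\abr{\Vcal_c}} = n p_c$; moreover $\sum_c p_c = 1$, hence $\sum_c\lambda_c = n$, and in particular $\abr{\Fcal}\le n$ (each $c\in\Fcal$ contributes $\lambda_c\ge 1$ to a sum equal to $n$). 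The tail bound I will use is, for any integer $k\ge 1$,
\begin{align*}
  \PP\sbr{\abr{\Vcal_c}\ge k}\;\le\;\binom{n}{k}p_c^{k}\;\le\;\frac{\lambda_c^{k}}{k!}\;\le\;\rbr{\frac{e\lambda_c}{k}}^{k},
\end{align*}
the last step using $k!\ge (k/e)^{k}$.

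For frequent colors, fix $c\in\Fcal$ and set $k_c := \lfloor \lambda_c\log_2 n\rfloor + 1$. Since $\abr{\Vcal_c}$ is integer-valued, the event $\{\abr{\Vcal_c}>\lambda_c\log_2 n\}$ is contained in $\{\abr{\Vcal_c}\ge k_c\}$, whose probability is at most $\rbr{e\lambda_c/k_c}^{k_c}\le\rbr{e/\log_2 n}^{k_c}\le\rbr{e/\log_2 n}^{\log_2 n}$ once $n$ is large enough that $\log_2 n\ge e$ and using $\lambda_c\ge 1$ (the base is $\le 1$ and $k_c\ge\log_2 n$). A union bound over the at most $n$ frequent colors gives $\PP\sbr{m_{\Fcal}>\log_2 n}\le n\,\rbr{e/\log_2 n}^{\log_2 n} = n^{\,1+\log_2 e-\log_2\log_2 n}$, whose exponent tends to $-\infty$; so this probability vanishes, indeed faster than any polynomial in $1/n$.

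For infrequent colors a deviation bound relative to $\lambda_c$ is useless (when $\lambda_c$ is tiny, $m_{\Ical}$ is instead defined as the raw count $\abr{\Vcal_c}$), so I would pick the single absolute threshold $k := \lfloor \log_2 n\rfloor + 1$ and exploit $\lambda_c<1$ to get $\PP\sbr{\abr{\Vcal_c}\ge k}\le\lambda_c^{k}/k!\le\lambda_c/k!$ for every $c\in\Ical$. Summing over $\Ical$ and using $\sum_c\lambda_c = n$,
\begin{align*}
  \PP\sbr{m_{\Ical}>\log_2 n}\;\le\;\frac{1}{k!}\sum_{c\in\Ical}\lambda_c\;\le\;\frac{n}{k!}\;\le\;n^{\,1+\log_2 e-\log_2\log_2 n}\;\longrightarrow\;0,
\end{align*}
again by $k!\ge(k/e)^{k}$ together with $k>\log_2 n$. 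A final union bound over the two bad events proves the theorem.

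The main obstacle is not any single calculation but choosing the right quantity to control on the infrequent side: $\abr{\Ical}$ is not small, and each infrequent color individually violates the relative bound $\abr{\Vcal_c}\le\lambda_c\log_2 n$ with probability of order $\lambda_c$, so a relative-deviation argument summed over $\Ical$ does not tend to zero. This is precisely why $m_{\Ical}$ must be an \emph{absolute} maximum and why it is the bound on the \emph{total} mass $\sum_c\lambda_c=n$, rather than on the number of colors, that makes the union bound succeed. A secondary caveat to flag is that the statement is asymptotic: both tail bounds have the form $n^{\,O(1)-\log_2\log_2 n}$, which only becomes small once $\log_2\log_2 n$ exceeds the constant $1+\log_2 e$, i.e.\ for $n$ past a few dozen.
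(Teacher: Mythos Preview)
Your proof is correct and follows the same high-level strategy as the paper's: a per-color tail bound followed by a union bound, treating $\Fcal$ and $\Ical$ separately. The paper invokes the multiplicative Chernoff--Hoeffding inequality for $c\in\Fcal$ and the additive form for $c\in\Ical$, whereas you use the single elementary estimate $\PP\sbr{\abr{\Vcal_c}\ge k}\le\binom{n}{k}p_c^{k}\le\lambda_c^{k}/k!$ throughout, which is more self-contained. On the infrequent side your argument is in fact a little cleaner than the paper's: rather than bounding each color's tail by something independent of $\lambda_c$ and then multiplying by the number of infrequent colors, you retain the factor $\lambda_c$ and exploit $\sum_c\lambda_c=n$, so the union bound succeeds without ever needing to control $\abr{\Ical}$. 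Both routes lead to the same $n^{O(1)-\log_2\log_2 n}$ tail and hence the same asymptotic conclusion; your version just trades named concentration inequalities for a first-moment bound over $k$-subsets.
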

\begin{proof}
  See Appendix~\ref{sec:proofs}.
\end{proof}

\subsection{Construction of Proposal Distribution}
\label{ssec:construction}

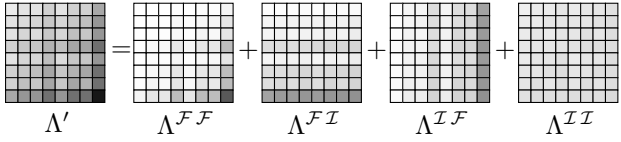
\begin{figure}
  \begin{center}
    \begin{tikzpicture}[scale=0.55]
      \draw[draw, fill=black!8] (0.000000,2.400000) rectangle (0.300000, 2.100000);
\draw[draw, fill=black!11] (0.000000,2.100000) rectangle (0.300000, 1.800000);
\draw[draw, fill=black!11] (0.000000,1.800000) rectangle (0.300000, 1.500000);
\draw[draw, fill=black!18] (0.000000,1.500000) rectangle (0.300000, 1.200000);
\draw[draw, fill=black!11] (0.000000,1.200000) rectangle (0.300000, 0.900000);
\draw[draw, fill=black!18] (0.000000,0.900000) rectangle (0.300000, 0.600000);
\draw[draw, fill=black!18] (0.000000,0.600000) rectangle (0.300000, 0.300000);
\draw[draw, fill=black!34] (0.000000,0.300000) rectangle (0.300000, 0.000000);
\draw[draw, fill=black!11] (0.300000,2.400000) rectangle (0.600000, 2.100000);
\draw[draw, fill=black!14] (0.300000,2.100000) rectangle (0.600000, 1.800000);
\draw[draw, fill=black!16] (0.300000,1.800000) rectangle (0.600000, 1.500000);
\draw[draw, fill=black!22] (0.300000,1.500000) rectangle (0.600000, 1.200000);
\draw[draw, fill=black!16] (0.300000,1.200000) rectangle (0.600000, 0.900000);
\draw[draw, fill=black!22] (0.300000,0.900000) rectangle (0.600000, 0.600000);
\draw[draw, fill=black!25] (0.300000,0.600000) rectangle (0.600000, 0.300000);
\draw[draw, fill=black!41] (0.300000,0.300000) rectangle (0.600000, 0.000000);
\draw[draw, fill=black!11] (0.600000,2.400000) rectangle (0.900000, 2.100000);
\draw[draw, fill=black!16] (0.600000,2.100000) rectangle (0.900000, 1.800000);
\draw[draw, fill=black!14] (0.600000,1.800000) rectangle (0.900000, 1.500000);
\draw[draw, fill=black!22] (0.600000,1.500000) rectangle (0.900000, 1.200000);
\draw[draw, fill=black!16] (0.600000,1.200000) rectangle (0.900000, 0.900000);
\draw[draw, fill=black!25] (0.600000,0.900000) rectangle (0.900000, 0.600000);
\draw[draw, fill=black!22] (0.600000,0.600000) rectangle (0.900000, 0.300000);
\draw[draw, fill=black!41] (0.600000,0.300000) rectangle (0.900000, 0.000000);
\draw[draw, fill=black!18] (0.900000,2.400000) rectangle (1.200000, 2.100000);
\draw[draw, fill=black!22] (0.900000,2.100000) rectangle (1.200000, 1.800000);
\draw[draw, fill=black!22] (0.900000,1.800000) rectangle (1.200000, 1.500000);
\draw[draw, fill=black!30] (0.900000,1.500000) rectangle (1.200000, 1.200000);
\draw[draw, fill=black!25] (0.900000,1.200000) rectangle (1.200000, 0.900000);
\draw[draw, fill=black!35] (0.900000,0.900000) rectangle (1.200000, 0.600000);
\draw[draw, fill=black!35] (0.900000,0.600000) rectangle (1.200000, 0.300000);
\draw[draw, fill=black!56] (0.900000,0.300000) rectangle (1.200000, 0.000000);
\draw[draw, fill=black!11] (1.200000,2.400000) rectangle (1.500000, 2.100000);
\draw[draw, fill=black!16] (1.200000,2.100000) rectangle (1.500000, 1.800000);
\draw[draw, fill=black!16] (1.200000,1.800000) rectangle (1.500000, 1.500000);
\draw[draw, fill=black!25] (1.200000,1.500000) rectangle (1.500000, 1.200000);
\draw[draw, fill=black!14] (1.200000,1.200000) rectangle (1.500000, 0.900000);
\draw[draw, fill=black!22] (1.200000,0.900000) rectangle (1.500000, 0.600000);
\draw[draw, fill=black!22] (1.200000,0.600000) rectangle (1.500000, 0.300000);
\draw[draw, fill=black!41] (1.200000,0.300000) rectangle (1.500000, 0.000000);
\draw[draw, fill=black!18] (1.500000,2.400000) rectangle (1.800000, 2.100000);
\draw[draw, fill=black!22] (1.500000,2.100000) rectangle (1.800000, 1.800000);
\draw[draw, fill=black!25] (1.500000,1.800000) rectangle (1.800000, 1.500000);
\draw[draw, fill=black!35] (1.500000,1.500000) rectangle (1.800000, 1.200000);
\draw[draw, fill=black!22] (1.500000,1.200000) rectangle (1.800000, 0.900000);
\draw[draw, fill=black!30] (1.500000,0.900000) rectangle (1.800000, 0.600000);
\draw[draw, fill=black!35] (1.500000,0.600000) rectangle (1.800000, 0.300000);
\draw[draw, fill=black!56] (1.500000,0.300000) rectangle (1.800000, 0.000000);
\draw[draw, fill=black!18] (1.800000,2.400000) rectangle (2.100000, 2.100000);
\draw[draw, fill=black!25] (1.800000,2.100000) rectangle (2.100000, 1.800000);
\draw[draw, fill=black!22] (1.800000,1.800000) rectangle (2.100000, 1.500000);
\draw[draw, fill=black!35] (1.800000,1.500000) rectangle (2.100000, 1.200000);
\draw[draw, fill=black!22] (1.800000,1.200000) rectangle (2.100000, 0.900000);
\draw[draw, fill=black!35] (1.800000,0.900000) rectangle (2.100000, 0.600000);
\draw[draw, fill=black!30] (1.800000,0.600000) rectangle (2.100000, 0.300000);
\draw[draw, fill=black!56] (1.800000,0.300000) rectangle (2.100000, 0.000000);
\draw[draw, fill=black!34] (2.100000,2.400000) rectangle (2.400000, 2.100000);
\draw[draw, fill=black!41] (2.100000,2.100000) rectangle (2.400000, 1.800000);
\draw[draw, fill=black!41] (2.100000,1.800000) rectangle (2.400000, 1.500000);
\draw[draw, fill=black!56] (2.100000,1.500000) rectangle (2.400000, 1.200000);
\draw[draw, fill=black!41] (2.100000,1.200000) rectangle (2.400000, 0.900000);
\draw[draw, fill=black!56] (2.100000,0.900000) rectangle (2.400000, 0.600000);
\draw[draw, fill=black!56] (2.100000,0.600000) rectangle (2.400000, 0.300000);
\draw[draw, fill=black!91] (2.100000,0.300000) rectangle (2.400000, 0.000000);
      \draw[below] (1.25, 0) node{{$\Lambda'$}};
      \draw[right] (2.325, 1.25) node{{$=$}};
      \begin{scope}[yshift = 0cm, xshift = 3.1cm]
        \draw[draw, fill=black!0] (0.000000,2.400000) rectangle (0.300000, 2.100000);
\draw[draw, fill=black!0] (0.000000,2.100000) rectangle (0.300000, 1.800000);
\draw[draw, fill=black!0] (0.000000,1.800000) rectangle (0.300000, 1.500000);
\draw[draw, fill=black!1] (0.000000,1.500000) rectangle (0.300000, 1.200000);
\draw[draw, fill=black!0] (0.000000,1.200000) rectangle (0.300000, 0.900000);
\draw[draw, fill=black!1] (0.000000,0.900000) rectangle (0.300000, 0.600000);
\draw[draw, fill=black!1] (0.000000,0.600000) rectangle (0.300000, 0.300000);
\draw[draw, fill=black!4] (0.000000,0.300000) rectangle (0.300000, 0.000000);
\draw[draw, fill=black!0] (0.300000,2.400000) rectangle (0.600000, 2.100000);
\draw[draw, fill=black!1] (0.300000,2.100000) rectangle (0.600000, 1.800000);
\draw[draw, fill=black!1] (0.300000,1.800000) rectangle (0.600000, 1.500000);
\draw[draw, fill=black!3] (0.300000,1.500000) rectangle (0.600000, 1.200000);
\draw[draw, fill=black!1] (0.300000,1.200000) rectangle (0.600000, 0.900000);
\draw[draw, fill=black!3] (0.300000,0.900000) rectangle (0.600000, 0.600000);
\draw[draw, fill=black!4] (0.300000,0.600000) rectangle (0.600000, 0.300000);
\draw[draw, fill=black!10] (0.300000,0.300000) rectangle (0.600000, 0.000000);
\draw[draw, fill=black!0] (0.600000,2.400000) rectangle (0.900000, 2.100000);
\draw[draw, fill=black!1] (0.600000,2.100000) rectangle (0.900000, 1.800000);
\draw[draw, fill=black!1] (0.600000,1.800000) rectangle (0.900000, 1.500000);
\draw[draw, fill=black!3] (0.600000,1.500000) rectangle (0.900000, 1.200000);
\draw[draw, fill=black!1] (0.600000,1.200000) rectangle (0.900000, 0.900000);
\draw[draw, fill=black!4] (0.600000,0.900000) rectangle (0.900000, 0.600000);
\draw[draw, fill=black!3] (0.600000,0.600000) rectangle (0.900000, 0.300000);
\draw[draw, fill=black!10] (0.600000,0.300000) rectangle (0.900000, 0.000000);
\draw[draw, fill=black!1] (0.900000,2.400000) rectangle (1.200000, 2.100000);
\draw[draw, fill=black!3] (0.900000,2.100000) rectangle (1.200000, 1.800000);
\draw[draw, fill=black!3] (0.900000,1.800000) rectangle (1.200000, 1.500000);
\draw[draw, fill=black!9] (0.900000,1.500000) rectangle (1.200000, 1.200000);
\draw[draw, fill=black!4] (0.900000,1.200000) rectangle (1.200000, 0.900000);
\draw[draw, fill=black!10] (0.900000,0.900000) rectangle (1.200000, 0.600000);
\draw[draw, fill=black!10] (0.900000,0.600000) rectangle (1.200000, 0.300000);
\draw[draw, fill=black!26] (0.900000,0.300000) rectangle (1.200000, 0.000000);
\draw[draw, fill=black!0] (1.200000,2.400000) rectangle (1.500000, 2.100000);
\draw[draw, fill=black!1] (1.200000,2.100000) rectangle (1.500000, 1.800000);
\draw[draw, fill=black!1] (1.200000,1.800000) rectangle (1.500000, 1.500000);
\draw[draw, fill=black!4] (1.200000,1.500000) rectangle (1.500000, 1.200000);
\draw[draw, fill=black!1] (1.200000,1.200000) rectangle (1.500000, 0.900000);
\draw[draw, fill=black!3] (1.200000,0.900000) rectangle (1.500000, 0.600000);
\draw[draw, fill=black!3] (1.200000,0.600000) rectangle (1.500000, 0.300000);
\draw[draw, fill=black!10] (1.200000,0.300000) rectangle (1.500000, 0.000000);
\draw[draw, fill=black!1] (1.500000,2.400000) rectangle (1.800000, 2.100000);
\draw[draw, fill=black!3] (1.500000,2.100000) rectangle (1.800000, 1.800000);
\draw[draw, fill=black!4] (1.500000,1.800000) rectangle (1.800000, 1.500000);
\draw[draw, fill=black!10] (1.500000,1.500000) rectangle (1.800000, 1.200000);
\draw[draw, fill=black!3] (1.500000,1.200000) rectangle (1.800000, 0.900000);
\draw[draw, fill=black!9] (1.500000,0.900000) rectangle (1.800000, 0.600000);
\draw[draw, fill=black!10] (1.500000,0.600000) rectangle (1.800000, 0.300000);
\draw[draw, fill=black!26] (1.500000,0.300000) rectangle (1.800000, 0.000000);
\draw[draw, fill=black!1] (1.800000,2.400000) rectangle (2.100000, 2.100000);
\draw[draw, fill=black!4] (1.800000,2.100000) rectangle (2.100000, 1.800000);
\draw[draw, fill=black!3] (1.800000,1.800000) rectangle (2.100000, 1.500000);
\draw[draw, fill=black!10] (1.800000,1.500000) rectangle (2.100000, 1.200000);
\draw[draw, fill=black!3] (1.800000,1.200000) rectangle (2.100000, 0.900000);
\draw[draw, fill=black!10] (1.800000,0.900000) rectangle (2.100000, 0.600000);
\draw[draw, fill=black!9] (1.800000,0.600000) rectangle (2.100000, 0.300000);
\draw[draw, fill=black!26] (1.800000,0.300000) rectangle (2.100000, 0.000000);
\draw[draw, fill=black!4] (2.100000,2.400000) rectangle (2.400000, 2.100000);
\draw[draw, fill=black!10] (2.100000,2.100000) rectangle (2.400000, 1.800000);
\draw[draw, fill=black!10] (2.100000,1.800000) rectangle (2.400000, 1.500000);
\draw[draw, fill=black!26] (2.100000,1.500000) rectangle (2.400000, 1.200000);
\draw[draw, fill=black!10] (2.100000,1.200000) rectangle (2.400000, 0.900000);
\draw[draw, fill=black!26] (2.100000,0.900000) rectangle (2.400000, 0.600000);
\draw[draw, fill=black!26] (2.100000,0.600000) rectangle (2.400000, 0.300000);
\draw[draw, fill=black!65] (2.100000,0.300000) rectangle (2.400000, 0.000000);
        \draw[below] (1.25, 0) node{{$\Lambda^{\Fcal\Fcal}$}};
      \end{scope}
      \draw[right] (5.4, 1.25) node{{$+$}};
      \begin{scope}[yshift = 0cm, xshift = 6.2cm]
        \draw[draw, fill=black!1] (0.000000,2.400000) rectangle (0.300000, 2.100000);
\draw[draw, fill=black!4] (0.000000,2.100000) rectangle (0.300000, 1.800000);
\draw[draw, fill=black!4] (0.000000,1.800000) rectangle (0.300000, 1.500000);
\draw[draw, fill=black!13] (0.000000,1.500000) rectangle (0.300000, 1.200000);
\draw[draw, fill=black!4] (0.000000,1.200000) rectangle (0.300000, 0.900000);
\draw[draw, fill=black!13] (0.000000,0.900000) rectangle (0.300000, 0.600000);
\draw[draw, fill=black!13] (0.000000,0.600000) rectangle (0.300000, 0.300000);
\draw[draw, fill=black!36] (0.000000,0.300000) rectangle (0.300000, 0.000000);
\draw[draw, fill=black!1] (0.300000,2.400000) rectangle (0.600000, 2.100000);
\draw[draw, fill=black!4] (0.300000,2.100000) rectangle (0.600000, 1.800000);
\draw[draw, fill=black!5] (0.300000,1.800000) rectangle (0.600000, 1.500000);
\draw[draw, fill=black!13] (0.300000,1.500000) rectangle (0.600000, 1.200000);
\draw[draw, fill=black!5] (0.300000,1.200000) rectangle (0.600000, 0.900000);
\draw[draw, fill=black!13] (0.300000,0.900000) rectangle (0.600000, 0.600000);
\draw[draw, fill=black!15] (0.300000,0.600000) rectangle (0.600000, 0.300000);
\draw[draw, fill=black!39] (0.300000,0.300000) rectangle (0.600000, 0.000000);
\draw[draw, fill=black!1] (0.600000,2.400000) rectangle (0.900000, 2.100000);
\draw[draw, fill=black!5] (0.600000,2.100000) rectangle (0.900000, 1.800000);
\draw[draw, fill=black!4] (0.600000,1.800000) rectangle (0.900000, 1.500000);
\draw[draw, fill=black!13] (0.600000,1.500000) rectangle (0.900000, 1.200000);
\draw[draw, fill=black!5] (0.600000,1.200000) rectangle (0.900000, 0.900000);
\draw[draw, fill=black!15] (0.600000,0.900000) rectangle (0.900000, 0.600000);
\draw[draw, fill=black!13] (0.600000,0.600000) rectangle (0.900000, 0.300000);
\draw[draw, fill=black!39] (0.600000,0.300000) rectangle (0.900000, 0.000000);
\draw[draw, fill=black!2] (0.900000,2.400000) rectangle (1.200000, 2.100000);
\draw[draw, fill=black!5] (0.900000,2.100000) rectangle (1.200000, 1.800000);
\draw[draw, fill=black!5] (0.900000,1.800000) rectangle (1.200000, 1.500000);
\draw[draw, fill=black!14] (0.900000,1.500000) rectangle (1.200000, 1.200000);
\draw[draw, fill=black!6] (0.900000,1.200000) rectangle (1.200000, 0.900000);
\draw[draw, fill=black!16] (0.900000,0.900000) rectangle (1.200000, 0.600000);
\draw[draw, fill=black!16] (0.900000,0.600000) rectangle (1.200000, 0.300000);
\draw[draw, fill=black!41] (0.900000,0.300000) rectangle (1.200000, 0.000000);
\draw[draw, fill=black!1] (1.200000,2.400000) rectangle (1.500000, 2.100000);
\draw[draw, fill=black!5] (1.200000,2.100000) rectangle (1.500000, 1.800000);
\draw[draw, fill=black!5] (1.200000,1.800000) rectangle (1.500000, 1.500000);
\draw[draw, fill=black!15] (1.200000,1.500000) rectangle (1.500000, 1.200000);
\draw[draw, fill=black!4] (1.200000,1.200000) rectangle (1.500000, 0.900000);
\draw[draw, fill=black!13] (1.200000,0.900000) rectangle (1.500000, 0.600000);
\draw[draw, fill=black!13] (1.200000,0.600000) rectangle (1.500000, 0.300000);
\draw[draw, fill=black!39] (1.200000,0.300000) rectangle (1.500000, 0.000000);
\draw[draw, fill=black!2] (1.500000,2.400000) rectangle (1.800000, 2.100000);
\draw[draw, fill=black!5] (1.500000,2.100000) rectangle (1.800000, 1.800000);
\draw[draw, fill=black!6] (1.500000,1.800000) rectangle (1.800000, 1.500000);
\draw[draw, fill=black!16] (1.500000,1.500000) rectangle (1.800000, 1.200000);
\draw[draw, fill=black!5] (1.500000,1.200000) rectangle (1.800000, 0.900000);
\draw[draw, fill=black!14] (1.500000,0.900000) rectangle (1.800000, 0.600000);
\draw[draw, fill=black!16] (1.500000,0.600000) rectangle (1.800000, 0.300000);
\draw[draw, fill=black!41] (1.500000,0.300000) rectangle (1.800000, 0.000000);
\draw[draw, fill=black!2] (1.800000,2.400000) rectangle (2.100000, 2.100000);
\draw[draw, fill=black!6] (1.800000,2.100000) rectangle (2.100000, 1.800000);
\draw[draw, fill=black!5] (1.800000,1.800000) rectangle (2.100000, 1.500000);
\draw[draw, fill=black!16] (1.800000,1.500000) rectangle (2.100000, 1.200000);
\draw[draw, fill=black!5] (1.800000,1.200000) rectangle (2.100000, 0.900000);
\draw[draw, fill=black!16] (1.800000,0.900000) rectangle (2.100000, 0.600000);
\draw[draw, fill=black!14] (1.800000,0.600000) rectangle (2.100000, 0.300000);
\draw[draw, fill=black!41] (1.800000,0.300000) rectangle (2.100000, 0.000000);
\draw[draw, fill=black!2] (2.100000,2.400000) rectangle (2.400000, 2.100000);
\draw[draw, fill=black!7] (2.100000,2.100000) rectangle (2.400000, 1.800000);
\draw[draw, fill=black!7] (2.100000,1.800000) rectangle (2.400000, 1.500000);
\draw[draw, fill=black!17] (2.100000,1.500000) rectangle (2.400000, 1.200000);
\draw[draw, fill=black!7] (2.100000,1.200000) rectangle (2.400000, 0.900000);
\draw[draw, fill=black!17] (2.100000,0.900000) rectangle (2.400000, 0.600000);
\draw[draw, fill=black!17] (2.100000,0.600000) rectangle (2.400000, 0.300000);
\draw[draw, fill=black!43] (2.100000,0.300000) rectangle (2.400000, 0.000000);
        \draw[below] (1.25, 0) node{{$\Lambda^{\Fcal\Ical}$}};
      \end{scope}
      \draw[right] (8.5, 1.25) node{{$+$}};
      \begin{scope}[yshift = 0cm, xshift = 9.3cm]
        \draw[draw, fill=black!1] (0.000000,2.400000) rectangle (0.300000, 2.100000);
\draw[draw, fill=black!1] (0.000000,2.100000) rectangle (0.300000, 1.800000);
\draw[draw, fill=black!1] (0.000000,1.800000) rectangle (0.300000, 1.500000);
\draw[draw, fill=black!2] (0.000000,1.500000) rectangle (0.300000, 1.200000);
\draw[draw, fill=black!1] (0.000000,1.200000) rectangle (0.300000, 0.900000);
\draw[draw, fill=black!2] (0.000000,0.900000) rectangle (0.300000, 0.600000);
\draw[draw, fill=black!2] (0.000000,0.600000) rectangle (0.300000, 0.300000);
\draw[draw, fill=black!2] (0.000000,0.300000) rectangle (0.300000, 0.000000);
\draw[draw, fill=black!4] (0.300000,2.400000) rectangle (0.600000, 2.100000);
\draw[draw, fill=black!4] (0.300000,2.100000) rectangle (0.600000, 1.800000);
\draw[draw, fill=black!5] (0.300000,1.800000) rectangle (0.600000, 1.500000);
\draw[draw, fill=black!5] (0.300000,1.500000) rectangle (0.600000, 1.200000);
\draw[draw, fill=black!5] (0.300000,1.200000) rectangle (0.600000, 0.900000);
\draw[draw, fill=black!5] (0.300000,0.900000) rectangle (0.600000, 0.600000);
\draw[draw, fill=black!6] (0.300000,0.600000) rectangle (0.600000, 0.300000);
\draw[draw, fill=black!7] (0.300000,0.300000) rectangle (0.600000, 0.000000);
\draw[draw, fill=black!4] (0.600000,2.400000) rectangle (0.900000, 2.100000);
\draw[draw, fill=black!5] (0.600000,2.100000) rectangle (0.900000, 1.800000);
\draw[draw, fill=black!4] (0.600000,1.800000) rectangle (0.900000, 1.500000);
\draw[draw, fill=black!5] (0.600000,1.500000) rectangle (0.900000, 1.200000);
\draw[draw, fill=black!5] (0.600000,1.200000) rectangle (0.900000, 0.900000);
\draw[draw, fill=black!6] (0.600000,0.900000) rectangle (0.900000, 0.600000);
\draw[draw, fill=black!5] (0.600000,0.600000) rectangle (0.900000, 0.300000);
\draw[draw, fill=black!7] (0.600000,0.300000) rectangle (0.900000, 0.000000);
\draw[draw, fill=black!13] (0.900000,2.400000) rectangle (1.200000, 2.100000);
\draw[draw, fill=black!13] (0.900000,2.100000) rectangle (1.200000, 1.800000);
\draw[draw, fill=black!13] (0.900000,1.800000) rectangle (1.200000, 1.500000);
\draw[draw, fill=black!14] (0.900000,1.500000) rectangle (1.200000, 1.200000);
\draw[draw, fill=black!15] (0.900000,1.200000) rectangle (1.200000, 0.900000);
\draw[draw, fill=black!16] (0.900000,0.900000) rectangle (1.200000, 0.600000);
\draw[draw, fill=black!16] (0.900000,0.600000) rectangle (1.200000, 0.300000);
\draw[draw, fill=black!17] (0.900000,0.300000) rectangle (1.200000, 0.000000);
\draw[draw, fill=black!4] (1.200000,2.400000) rectangle (1.500000, 2.100000);
\draw[draw, fill=black!5] (1.200000,2.100000) rectangle (1.500000, 1.800000);
\draw[draw, fill=black!5] (1.200000,1.800000) rectangle (1.500000, 1.500000);
\draw[draw, fill=black!6] (1.200000,1.500000) rectangle (1.500000, 1.200000);
\draw[draw, fill=black!4] (1.200000,1.200000) rectangle (1.500000, 0.900000);
\draw[draw, fill=black!5] (1.200000,0.900000) rectangle (1.500000, 0.600000);
\draw[draw, fill=black!5] (1.200000,0.600000) rectangle (1.500000, 0.300000);
\draw[draw, fill=black!7] (1.200000,0.300000) rectangle (1.500000, 0.000000);
\draw[draw, fill=black!13] (1.500000,2.400000) rectangle (1.800000, 2.100000);
\draw[draw, fill=black!13] (1.500000,2.100000) rectangle (1.800000, 1.800000);
\draw[draw, fill=black!15] (1.500000,1.800000) rectangle (1.800000, 1.500000);
\draw[draw, fill=black!16] (1.500000,1.500000) rectangle (1.800000, 1.200000);
\draw[draw, fill=black!13] (1.500000,1.200000) rectangle (1.800000, 0.900000);
\draw[draw, fill=black!14] (1.500000,0.900000) rectangle (1.800000, 0.600000);
\draw[draw, fill=black!16] (1.500000,0.600000) rectangle (1.800000, 0.300000);
\draw[draw, fill=black!17] (1.500000,0.300000) rectangle (1.800000, 0.000000);
\draw[draw, fill=black!13] (1.800000,2.400000) rectangle (2.100000, 2.100000);
\draw[draw, fill=black!15] (1.800000,2.100000) rectangle (2.100000, 1.800000);
\draw[draw, fill=black!13] (1.800000,1.800000) rectangle (2.100000, 1.500000);
\draw[draw, fill=black!16] (1.800000,1.500000) rectangle (2.100000, 1.200000);
\draw[draw, fill=black!13] (1.800000,1.200000) rectangle (2.100000, 0.900000);
\draw[draw, fill=black!16] (1.800000,0.900000) rectangle (2.100000, 0.600000);
\draw[draw, fill=black!14] (1.800000,0.600000) rectangle (2.100000, 0.300000);
\draw[draw, fill=black!17] (1.800000,0.300000) rectangle (2.100000, 0.000000);
\draw[draw, fill=black!36] (2.100000,2.400000) rectangle (2.400000, 2.100000);
\draw[draw, fill=black!39] (2.100000,2.100000) rectangle (2.400000, 1.800000);
\draw[draw, fill=black!39] (2.100000,1.800000) rectangle (2.400000, 1.500000);
\draw[draw, fill=black!41] (2.100000,1.500000) rectangle (2.400000, 1.200000);
\draw[draw, fill=black!39] (2.100000,1.200000) rectangle (2.400000, 0.900000);
\draw[draw, fill=black!41] (2.100000,0.900000) rectangle (2.400000, 0.600000);
\draw[draw, fill=black!41] (2.100000,0.600000) rectangle (2.400000, 0.300000);
\draw[draw, fill=black!43] (2.100000,0.300000) rectangle (2.400000, 0.000000);
        \draw[below] (1.25, 0) node{{$\Lambda^{\Ical\Fcal}$}};
      \end{scope}
      \draw[right] (11.6, 1.25) node{{$+$}};
      \begin{scope}[yshift = 0cm, xshift = 12.4cm]
        \draw[draw, fill=black!6] (0.000000,2.400000) rectangle (0.300000, 2.100000);
\draw[draw, fill=black!8] (0.000000,2.100000) rectangle (0.300000, 1.800000);
\draw[draw, fill=black!8] (0.000000,1.800000) rectangle (0.300000, 1.500000);
\draw[draw, fill=black!10] (0.000000,1.500000) rectangle (0.300000, 1.200000);
\draw[draw, fill=black!8] (0.000000,1.200000) rectangle (0.300000, 0.900000);
\draw[draw, fill=black!10] (0.000000,0.900000) rectangle (0.300000, 0.600000);
\draw[draw, fill=black!10] (0.000000,0.600000) rectangle (0.300000, 0.300000);
\draw[draw, fill=black!12] (0.000000,0.300000) rectangle (0.300000, 0.000000);
\draw[draw, fill=black!8] (0.300000,2.400000) rectangle (0.600000, 2.100000);
\draw[draw, fill=black!8] (0.300000,2.100000) rectangle (0.600000, 1.800000);
\draw[draw, fill=black!10] (0.300000,1.800000) rectangle (0.600000, 1.500000);
\draw[draw, fill=black!10] (0.300000,1.500000) rectangle (0.600000, 1.200000);
\draw[draw, fill=black!10] (0.300000,1.200000) rectangle (0.600000, 0.900000);
\draw[draw, fill=black!10] (0.300000,0.900000) rectangle (0.600000, 0.600000);
\draw[draw, fill=black!12] (0.300000,0.600000) rectangle (0.600000, 0.300000);
\draw[draw, fill=black!13] (0.300000,0.300000) rectangle (0.600000, 0.000000);
\draw[draw, fill=black!8] (0.600000,2.400000) rectangle (0.900000, 2.100000);
\draw[draw, fill=black!10] (0.600000,2.100000) rectangle (0.900000, 1.800000);
\draw[draw, fill=black!8] (0.600000,1.800000) rectangle (0.900000, 1.500000);
\draw[draw, fill=black!10] (0.600000,1.500000) rectangle (0.900000, 1.200000);
\draw[draw, fill=black!10] (0.600000,1.200000) rectangle (0.900000, 0.900000);
\draw[draw, fill=black!12] (0.600000,0.900000) rectangle (0.900000, 0.600000);
\draw[draw, fill=black!10] (0.600000,0.600000) rectangle (0.900000, 0.300000);
\draw[draw, fill=black!13] (0.600000,0.300000) rectangle (0.900000, 0.000000);
\draw[draw, fill=black!10] (0.900000,2.400000) rectangle (1.200000, 2.100000);
\draw[draw, fill=black!10] (0.900000,2.100000) rectangle (1.200000, 1.800000);
\draw[draw, fill=black!10] (0.900000,1.800000) rectangle (1.200000, 1.500000);
\draw[draw, fill=black!11] (0.900000,1.500000) rectangle (1.200000, 1.200000);
\draw[draw, fill=black!12] (0.900000,1.200000) rectangle (1.200000, 0.900000);
\draw[draw, fill=black!13] (0.900000,0.900000) rectangle (1.200000, 0.600000);
\draw[draw, fill=black!13] (0.900000,0.600000) rectangle (1.200000, 0.300000);
\draw[draw, fill=black!13] (0.900000,0.300000) rectangle (1.200000, 0.000000);
\draw[draw, fill=black!8] (1.200000,2.400000) rectangle (1.500000, 2.100000);
\draw[draw, fill=black!10] (1.200000,2.100000) rectangle (1.500000, 1.800000);
\draw[draw, fill=black!10] (1.200000,1.800000) rectangle (1.500000, 1.500000);
\draw[draw, fill=black!12] (1.200000,1.500000) rectangle (1.500000, 1.200000);
\draw[draw, fill=black!8] (1.200000,1.200000) rectangle (1.500000, 0.900000);
\draw[draw, fill=black!10] (1.200000,0.900000) rectangle (1.500000, 0.600000);
\draw[draw, fill=black!10] (1.200000,0.600000) rectangle (1.500000, 0.300000);
\draw[draw, fill=black!13] (1.200000,0.300000) rectangle (1.500000, 0.000000);
\draw[draw, fill=black!10] (1.500000,2.400000) rectangle (1.800000, 2.100000);
\draw[draw, fill=black!10] (1.500000,2.100000) rectangle (1.800000, 1.800000);
\draw[draw, fill=black!12] (1.500000,1.800000) rectangle (1.800000, 1.500000);
\draw[draw, fill=black!13] (1.500000,1.500000) rectangle (1.800000, 1.200000);
\draw[draw, fill=black!10] (1.500000,1.200000) rectangle (1.800000, 0.900000);
\draw[draw, fill=black!11] (1.500000,0.900000) rectangle (1.800000, 0.600000);
\draw[draw, fill=black!13] (1.500000,0.600000) rectangle (1.800000, 0.300000);
\draw[draw, fill=black!13] (1.500000,0.300000) rectangle (1.800000, 0.000000);
\draw[draw, fill=black!10] (1.800000,2.400000) rectangle (2.100000, 2.100000);
\draw[draw, fill=black!12] (1.800000,2.100000) rectangle (2.100000, 1.800000);
\draw[draw, fill=black!10] (1.800000,1.800000) rectangle (2.100000, 1.500000);
\draw[draw, fill=black!13] (1.800000,1.500000) rectangle (2.100000, 1.200000);
\draw[draw, fill=black!10] (1.800000,1.200000) rectangle (2.100000, 0.900000);
\draw[draw, fill=black!13] (1.800000,0.900000) rectangle (2.100000, 0.600000);
\draw[draw, fill=black!11] (1.800000,0.600000) rectangle (2.100000, 0.300000);
\draw[draw, fill=black!13] (1.800000,0.300000) rectangle (2.100000, 0.000000);
\draw[draw, fill=black!12] (2.100000,2.400000) rectangle (2.400000, 2.100000);
\draw[draw, fill=black!13] (2.100000,2.100000) rectangle (2.400000, 1.800000);
\draw[draw, fill=black!13] (2.100000,1.800000) rectangle (2.400000, 1.500000);
\draw[draw, fill=black!13] (2.100000,1.500000) rectangle (2.400000, 1.200000);
\draw[draw, fill=black!13] (2.100000,1.200000) rectangle (2.400000, 0.900000);
\draw[draw, fill=black!13] (2.100000,0.900000) rectangle (2.400000, 0.600000);
\draw[draw, fill=black!13] (2.100000,0.600000) rectangle (2.400000, 0.300000);
\draw[draw, fill=black!14] (2.100000,0.300000) rectangle (2.400000, 0.000000);
        \draw[below] (1.25, 0) node{{$\Lambda^{\Ical\Ical}$}};
      \end{scope}
    \end{tikzpicture}
  \end{center}
  \caption{Decomposition of $\Lambda'$ into $\Lambda_{\Ical, \Ical}$, $\Lambda_{\Fcal,
      \Fcal}$, $\Lambda_{\Fcal, \Ical}$ and $\Lambda_{\Ical, \Fcal}$ . Parameters
    $\Theta = (0.7, 0.85; 0.85, 0.9)$, $d=3$ and $\mu=0.7$ was used. It
    can be seen that the values of $\Lambda^{\Fcal\Fcal}$ are concentrated on
    highly probable pairs, while the values of $\Lambda^{\Ical\Ical}$ are
    relatively spread out.}
  \label{fig:b_decomposition}
\end{figure}

Finally, we construct the proposal distribution. The matrix $B'$ is
the sum of four different BDP matrices
\begin{align}
  B' = B^{(\Fcal\Fcal)} + B^{(\Fcal\Ical)} + B^{(\Ical\Fcal)} + B^{(\Ical\Ical)}.
\end{align}
Intuitively, $B^{(\Fcal\Fcal)}$ concentrates on covering entries of
$B$ between frequent colors, while $B^{(\Ical\Ical)}$ spreads out its
parameters to ensure that every entry of $B$ is properly covered. On the
other hand, $B^{(\Fcal\Ical)}$ and $B^{(\Ical\Fcal)}$ covers entries
between a frequent color and other colors.
Figure~\ref{fig:b_decomposition} visualizes the effect of each
component.

For $\Acal, \Bcal \in \cbr{\Fcal, \Ical}$, let
$\Thetat^{(\Acal\Bcal)}$ and $d$ be parameters of BDP $B^{(\Acal,
  \Bcal)}$. Following notation in \eqref{eq:thetat-def} again, the
$k$-th component of these matrices are defined as
\begin{align}
  \Theta'^{(\Fcal\Fcal)(k)} &:= \rbr{n \, m_{\Fcal}}^{\frac 2 d} \cdot
  \nonumber \\
  &\;\;\;\mymatrix{cc}{
    \rbr{1-\mu^{(k)}}^2 \theta_{00}^{(k)} & \rbr{1-\mu^{(k)}} \mu^{(k)} \theta_{01}^{(k)} \\
    \mu^{(k)} \rbr{1-\mu^{(k)}} \theta_{10}^{(k)} & \rbr{\mu^{(k)}}^2
    \theta_{11}^{(k)} }, \nonumber \\
  \Theta'^{(\Fcal\Ical)(k)} &:= \rbr{n \, m_{\Fcal} \, m_{\Ical}}^{\frac 1 d} \cdot
  \nonumber \\
  &\;\;\;\mymatrix{cc}{
    \rbr{1-\mu^{(k)}} \theta_{00}^{(k)} & \rbr{1-\mu^{(k)}} \theta_{01}^{(k)} \\
    \mu^{(k)} \theta_{10}^{(k)} & \rbr{\mu^{(k)}} \theta_{11}^{(k)} },
  \nonumber \\
  \Theta'^{(\Ical\Fcal)(k)} &:= \rbr{n \, m_{\Ical} \, m_{\Fcal}}^{\frac 1 d}
  \mymatrix{cc}{
    \rbr{1-\mu^{(k)}} \theta_{00}^{(k)} & \mu^{(k)} \theta_{01}^{(k)} \\
    \rbr{1-\mu^{(k)}} \theta_{10}^{(k)} & \mu^{(k)} \theta_{11}^{(k)}
  }, \nonumber \\
  \Theta'^{(\Ical\Ical)(k)} &:= \rbr{m_{\Ical}}^{\frac 2 d}
  \mymatrix{cc}{
    \theta_{00}^{(k)} & \theta_{01}^{(k)} \\
    \theta_{10}^{(k)} & \theta_{11}^{(k)} }.
  \label{eq:proposal_thetat}
\end{align}
The following theorem proves that $B'$ is a valid proposal. That is,
$B'$ \emph{bounds} $B$ in the sense discussed in
Section~\ref{ssec:prob_trans}, and therefore given $B'$ we can sample $B$.

\begin{theorem}[Validity of Proposal] For any $c$ and $c'$ such that
  $0 \leq c, c' \leq n-1$, we have 
  \begin{align}
    \Lambda_{cc'} \leq \Lambda'_{cc'}.
  \end{align}
  \label{thm:validity}
\end{theorem}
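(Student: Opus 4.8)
The plan is to collapse the four-term bound into a single-term bound and then verify that one term by expanding the Kronecker products. First, note that $B' = B^{(\Fcal\Fcal)} + B^{(\Fcal\Ical)} + B^{(\Ical\Fcal)} + B^{(\Ical\Ical)}$ is a sum of independent BDP matrices, so its Poisson rate splits as $\Lambda'_{cc'} = \Lambda^{(\Fcal\Fcal)}_{cc'} + \Lambda^{(\Fcal\Ical)}_{cc'} + \Lambda^{(\Ical\Fcal)}_{cc'} + \Lambda^{(\Ical\Ical)}_{cc'}$, where $\Lambda^{(\Acal\Bcal)}_{cc'}$ is the rate of $B^{(\Acal\Bcal)}_{cc'}$. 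Each summand is a BDP rate, hence nonnegative, so it suffices to bound $\Lambda_{cc'}$ by the single summand $\Lambda^{(\Acal\Bcal)}_{cc'}$ for which $c \in \Acal$ and $c' \in \Bcal$. By Theorem~\ref{thm:bdp} applied to the BDP with parameters $\Thetat^{(\Acal\Bcal)}$ and $d$, that rate equals $\prod_{k=1}^{d} \theta'^{(\Acal\Bcal)(k)}_{b_k(c)\,b_k(c')}$, where $b_k(c)$ denotes the $k$-th bit of $c$.

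Next I would introduce the one piece of notation that unifies the four cases: let $p_c := \prod_{k=1}^{d} \rbr{\mu^{(k)}}^{b_k(c)}\rbr{1-\mu^{(k)}}^{1-b_k(c)}$ be the probability that a node receives color $c$, so that $\EE\sbr{\abr{\Vcal_c}} = n\,p_c$ and the definitions in \eqref{eq:def_m_f} yield $\abr{\Vcal_c} \le n\,p_c\,m_{\Fcal}$ when $c \in \Fcal$ and $\abr{\Vcal_c} \le m_{\Ical}$ when $c \in \Ical$. Then I would expand $\prod_{k=1}^{d}\theta'^{(\Acal\Bcal)(k)}_{b_k(c)\,b_k(c')}$ from \eqref{eq:proposal_thetat}: in every case the scalar prefactor collapses over the $d$ factors to the stated monomial in $n, m_{\Fcal}, m_{\Ical}$; the $\mu^{(k)}$-weights are arranged so that their product over $k$ is $p_c p_{c'}$ for $\Fcal\Fcal$, $p_c$ for $\Fcal\Ical$ (the weights sit on the source bit $b_k(c)$ only), $p_{c'}$ for $\Ical\Fcal$ (on the target bit $b_k(c')$ only), and $1$ for $\Ical\Ical$; and the leftover $\prod_{k=1}^{d}\theta^{(k)}_{b_k(c)\,b_k(c')}$ is exactly $\Gamma_{cc'}$ by \eqref{eq:kpgm_prob}. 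This yields $\Lambda^{(\Fcal\Fcal)}_{cc'} = (n\,m_{\Fcal})^{2}p_c p_{c'}\Gamma_{cc'}$, $\Lambda^{(\Fcal\Ical)}_{cc'} = n\,m_{\Fcal}m_{\Ical}p_c\Gamma_{cc'}$, $\Lambda^{(\Ical\Fcal)}_{cc'} = n\,m_{\Ical}m_{\Fcal}p_{c'}\Gamma_{cc'}$, and $\Lambda^{(\Ical\Ical)}_{cc'} = m_{\Ical}^{2}\Gamma_{cc'}$.

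The inequality then drops out of $\Lambda_{cc'} = \abr{\Vcal_c}\,\abr{\Vcal_{c'}}\,\Gamma_{cc'}$ (recall \eqref{eq:r_cc_def}) by applying, to each of $\abr{\Vcal_c}$ and $\abr{\Vcal_{c'}}$, whichever of the two elementary bounds matches the class of that color. For example, when $c, c' \in \Fcal$ we get $\abr{\Vcal_c}\,\abr{\Vcal_{c'}} \le (n\,p_c\,m_{\Fcal})(n\,p_{c'}\,m_{\Fcal}) = (n\,m_{\Fcal})^{2}p_c p_{c'}$, hence $\Lambda_{cc'} \le \Lambda^{(\Fcal\Fcal)}_{cc'}$; the remaining three cases are identical, a factor $n\,p\,m_{\Fcal}$ being replaced by $m_{\Ical}$ precisely when the corresponding color is infrequent, and they match the closed forms above term for term. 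Hence $\Lambda_{cc'} \le \Lambda^{(\Acal\Bcal)}_{cc'} \le \Lambda'_{cc'}$.

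I do not expect a genuine obstacle: the proposal matrices in \eqref{eq:proposal_thetat} look reverse-engineered precisely so that these four identities hold, so the real work is the bookkeeping. The step warranting care is the claimed factorization of the $\mu^{(k)}$-weights --- in particular, checking that the asymmetric matrices $\Theta'^{(\Fcal\Ical)(k)}$ and $\Theta'^{(\Ical\Fcal)(k)}$ load their Bernoulli weight onto the source bit and onto the target bit respectively, so that the products telescope into $p_c$ and into $p_{c'}$ rather than into something involving both colors --- together with the trivial observations that the claim is vacuous when $\Gamma_{cc'} = 0$ and that $m_{\Fcal}$ (resp.\ $m_{\Ical}$) is only ever invoked when $\Fcal$ (resp.\ $\Ical$) is nonempty.
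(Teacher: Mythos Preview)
Your proposal is correct and follows essentially the same approach as the paper: expand the Kronecker products defining each $\Lambda^{(\Acal\Bcal)}$ to obtain the four closed forms, then invoke the definitions of $m_{\Fcal}$ and $m_{\Ical}$ to bound $\abr{\Vcal_c}$ and $\abr{\Vcal_{c'}}$ in $\Lambda_{cc'}$. The only cosmetic difference is that the paper writes the closed forms in terms of $\EE\sbr{\abr{\Vcal_c}}$ rather than your $n\,p_c$, which is the same quantity.
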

\begin{proof}
  See Appendix \ref{sec:ProofTheorrefthm:validity}. 
\end{proof}
Also see Algorithm~\ref{alg:bdp_sample} of
Appendix~\ref{sec:pseudo_code} for the pseudo-code of the overall
algorithm.

\subsection{Time Complexity}
\label{sec:TimeComplexity}

As it takes $\Theta\rbr{d}$ time to generate each edge in BDP, let us
calculate the expected number of edges $B'$ will generate. The
following quantities similar to \eqref{eq:skg_edgenum} and
\eqref{eq:mag_edgenum} will be found useful
\begin{align}
  e_{MK} &= n \cdot \prod_{k=1}^d \rbr{\sum_{0 \leq a,b \leq 1}
    \mu^{a}\rbr{1-\mu}^{1-a} \theta_{ab}^{(k)}},\\
  e_{KM} &= n \cdot \prod_{k=1}^d \rbr{\sum_{0 \leq a,b \leq 1}
    \mu^{b}\rbr{1-\mu}^{1-b} \theta_{ab}^{(k)}}.
\end{align}
In general, $e_{MK}$ and $e_{KM}$ are not necessarily lower or upper
bounded by $e_{M}$ or $e_{K}$. However, for many of known parameter
values for KPGM and MAGM, especially those considered in
\citet{KimLes10} and \citet{YunVis12}, we empirically observe that they
are indeed between $e_{M}$ and $e_{K}$
\begin{align}
  \min \cbr{e_M, e_K} \leq e_{MK}, e_{KM} \leq \max
  \cbr{e_M, e_K}.
  \label{eq:empirical_obs}
\end{align}
see Figure~\ref{fig:e_mk_mm} for a graphical illustration.

\begin{figure}
  \begin{tikzpicture}[scale=0.45]
    \begin{axis}[
      legend pos=south east,
      width=250pt, height=200pt, no markers,
      xlabel={$\mu$},
      ylabel={expected number of edges},
      domain=0:1,
      samples=100
      ]

      \addplot+[color=black, no markers, thick]
      {(0.15 * (1-x) * (1-x) + 2 * 0.7 * (1-x) * x + 0.85 * x * x) * 4};

      \addplot+[color=red, no markers, thick, dashed]
      {(0.15 + 2 * 0.7 + 0.85)/4 * 4};

      \addplot+[color=blue, no markers, thick, loosely dashed]
      {(0.15 * (1-x) + 0.7 * (1-x) + 0.7 * x + 0.85 * x)/2 * 4};

      \legend{$e_{M}$, $e_{K}$, $e_{KM}\text{,} e_{MK}$}
    \end{axis}
  \end{tikzpicture}
  \begin{tikzpicture}[scale=0.45]
    \begin{axis}[
      legend pos=south east,
      width=250pt, height=200pt, no markers,
      xlabel={$\mu$},
      ylabel={expected number of edges},
      domain=0:1,
      samples=100
      ]

      \addplot+[color=black, no markers, thick]
      {(0.35 * (1-x) * (1-x) + 2 * 0.52 * (1-x) * x + 0.95 * x * x) * 4};

      \addplot+[color=red, no markers, thick, dashed]
      {(0.35 + 2 * 0.52 + 0.95)/4 * 4};

      \addplot+[color=blue, no markers, thick, loosely dashed]
      {(0.35 * (1-x) + 0.52 * (1-x) + 0.52 * x + 0.95 * x)/2 * 4};

      \legend{$e_{M}$, $e_{K}$, $e_{KM}\text{,} e_{MK}$}
    \end{axis}
  \end{tikzpicture}

  \caption{Values of $e_{M}$, $e_{K}$, $e_{KM}$ and
    $e_{MK}$ when $d=1$ and $\Theta = (0.15, 0.7; 0.7, 0.85)$ or
    $\Theta = (0.35, 0.52; 0.52, 0.95)$ was used. One can see that
    $e_{KM}$ and $e_{MK}$ are between $e_{M}$ and
    $e_{K}$, but for general $\Theta$ it may not be the case.}
  \label{fig:e_mk_mm}
\end{figure}
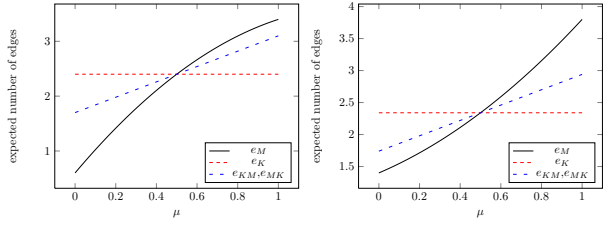

From straightforward calculation, $B^{(\Fcal\Fcal)}$,
$B^{(\Fcal\Ical)}$, $B^{(\Ical\Fcal)}$ and $B^{(\Ical\Ical)}$ generates
$m_{\Fcal}^2 e_{M}$, $m_{\Fcal} m_{\Ical} e_{MK}$, $m_{\Ical}m_{\Fcal}
e_{KM}$ and $m_{\Ical}^2 e_{K}$ edges in expectation, respectively. By
Theorem~\ref{thm:color_bound}, the overall time complexity is $O\rbr{d
  \cdot \rbr{\log_2 n}^2 \cdot \rbr{e_K + e_{KM} + e_{MK} + e_{M}}}$
with high probability. When \eqref{eq:empirical_obs} holds, it can be
further simplified to $O\rbr{d \cdot \rbr{\log_2 n}^2 \cdot \rbr{e_K +
    e_{M}}}$. Note that $d$ is also usually chosen to be $d \leq \log_2
n$. This implies that the time complexity of the whole algorithm is
almost linear in the number of expected edges in MAGM and an equivalent
KPGM.

Note that the time complexity of algorithm in \citet{YunVis12} is at
least $\Omega(d \cdot e_K)$ and attains the best guarantee of $O(d
\rbr{\log_2 n}^2 e_K)$ when $e_M = e_K$. When
\eqref{eq:empirical_obs} holds, therefore, our algorithm is at least
as efficient as their algorithm.

\subsection{Combining two Algorithms}
\label{sec:combine_algs}

Note that one can combine our algorithm and the algorithm of
\citet{YunVis12} to get improved performance. For both algorithms, it
only takes $O\rbr{nd}$ time to estimate the expected running time. Thus
one can always select the best algorithm for a given set of parameter
values.

\section{Experiments}

We empirically evaluated the efficiency and scalability of our sampling
algorithm. Our experiments are designed to answer the following
questions: 1) How does our algorithm scale as a function of $e_M$, the
expected number of edges in the graph? 2) What is the advantage of using
our algorithm compared to that of \citet{YunVis12}?

Our algorithm is implemented in \texttt{C++} and will be made available
for download from \url{http://anonymous}. For quilting algorithm, we
used the original implementation of \citet{YunVis12} which is also
written in \texttt{C++} and compiled with the same options. All
experiments are run on a machine with a 2.1 GHz processor running Linux.

Following \citet{YunVis12}, we uniformly set $n=2^d$, and used the
same $\Theta$ matrices and $\mu$ values at all levels: i.e., $\Theta =
\Theta^{(1)} = \Theta^{(2)} = \cdots = \Theta^{(d)}$ and $\mu =
\mu^{(1)} = \cdots = \mu^{(d)}$. Furthermore, we experimented with the
following $\Theta$ matrices used by \citet{KimLes10} and
\citet{MorNev09} to model real world graphs:
\begin{align}
  \Theta_1 =
  \mymatrix{cc}{
    0.15 & 0.7 \\
    0.7 & 0.85
  } \text{ and }
  \Theta_2 =
  \mymatrix{cc}{
    0.35 & 0.52 \\
    0.52 & 0.95
  }
  .
  \nonumber
\end{align}

\begin{figure}
  {\small (a) $\mu = 0.2$}
  \begin{center}
    \begin{tikzpicture}[scale=0.45]
      \begin{axis}[
        title={$\Theta_1$, $\mu=0.2$}, xlabel={Number of Expected Edges $e_M$},
        ylabel={Running Time (s)},
        cycle list name=black white, legend pos=north west]

        \addplot+[
        color=blue, mark=x, thick,
        error bars/.cd,
        y dir=both, y explicit,
        error mark=-]
        table[x=x,y=y,y error=errory]
        {icml_mus_0.2_1.dat};

        \addplot+[
        color=red, mark=o, mark options={solid}, thick, loosely dashed,
        error bars/.cd,
        y dir=both, y explicit,
        error mark=-]
        table[x=x,y=y,y error=errory]
        {cluster_mus_0.2_1.dat};

        \legend{BDP Sampler, Quilting}

      \end{axis}
    \end{tikzpicture}
    \begin{tikzpicture}[scale=0.45]
      \begin{axis}[
        title={$\Theta_2$, $\mu = 0.2$}, xlabel={Number of Expected Edges $e_M$},
        ylabel={Running Time (s)},
        cycle list name=black white, legend pos=north west]

        \addplot+[
        color=blue, mark=x, thick,
        error bars/.cd,
        y dir=both, y explicit,
        error mark=-]
        table[x=x,y=y,y error=errory]
        {icml_mus_0.2_2.dat};

        \addplot+[
        color=red, mark=o, mark options={solid}, thick, loosely dashed,
        error bars/.cd,
        y dir=both, y explicit,
        error mark=-]
        table[x=x,y=y,y error=errory]
        {cluster_mus_0.2_2.dat};

        \legend{BDP Sampler, Quilting}

      \end{axis}
    \end{tikzpicture}
  \end{center}

  {\small (b) $\mu = 0.3$}
  \begin{center}
    \begin{tikzpicture}[scale=0.45]
      \begin{axis}[
        title={$\Theta_1$, $\mu=0.3$}, xlabel={Number of Expected Edges $e_M$},
        ylabel={Running Time (s)},
        cycle list name=black white, legend pos=north west]

        \addplot+[
        color=blue, mark=x, thick,
        error bars/.cd,
        y dir=both, y explicit,
        error mark=-]
        table[x=x,y=y,y error=errory]
        {icml_mus_0.3_1.dat};

        \addplot+[
        color=red, mark=o, mark options={solid}, thick, loosely dashed,
        error bars/.cd,
        y dir=both, y explicit,
        error mark=-]
        table[x=x,y=y,y error=errory]
        {cluster_mus_0.3_1.dat};

        \legend{BDP Sampler, Quilting}

      \end{axis}
    \end{tikzpicture}
    \begin{tikzpicture}[scale=0.45]
      \begin{axis}[
        title={$\Theta_2$, $\mu=0.3$}, xlabel={Number of Expected Edges $e_M$},
        ylabel={Running Time (s)},
        cycle list name=black white, legend pos=north west]

        \addplot+[
        color=blue, mark=x, thick,
        error bars/.cd,
        y dir=both, y explicit,
        error mark=-]
        table[x=x,y=y,y error=errory]
        {icml_mus_0.3_2.dat};

        \addplot+[
        color=red, mark=o, mark options={solid}, thick, loosely dashed,
        error bars/.cd,
        y dir=both, y explicit,
        error mark=-]
        table[x=x,y=y,y error=errory]
        {cluster_mus_0.3_2.dat};

        \legend{BDP Sampler, Quilting}

      \end{axis}
    \end{tikzpicture}
  \end{center}
  {\small (c) $\mu = 0.5$}
  \begin{center}
    \begin{tikzpicture}[scale=0.45]
      \begin{axis}[
        title={$\Theta_1$, $\mu=0.5$}, xlabel={Number of Expected Edges $e_M$},
        ylabel={Running Time (s)},
        cycle list name=black white, legend pos=north west]

        \addplot+[
        color=blue, mark=x, thick,
        error bars/.cd,
        y dir=both, y explicit,
        error mark=-]
        table[x=x,y=y,y error=errory]
        {icml_mus_0.5_1.dat};

        \addplot+[
        color=red, mark=o, mark options={solid}, thick, loosely dashed,
        error bars/.cd,
        y dir=both, y explicit,
        error mark=-]
        table[x=x,y=y,y error=errory]
        {cluster_mus_0.5_1.dat};

        \legend{BDP Sampler, Quilting}

      \end{axis}
    \end{tikzpicture}
    \begin{tikzpicture}[scale=0.45]
      \begin{axis}[
        title={$\Theta_2$, $\mu = 0.5$}, xlabel={Number of Expected Edges $e_M$},
        ylabel={Running Time (s)},
        cycle list name=black white, legend pos=north west]

        \addplot+[
        color=blue, mark=x, thick,
        error bars/.cd,
        y dir=both, y explicit,
        error mark=-]
        table[x=x,y=y,y error=errory]
        {icml_mus_0.5_2.dat};

        \addplot+[
        color=red, mark=o, mark options={solid}, thick, loosely dashed,
        error bars/.cd,
        y dir=both, y explicit,
        error mark=-]
        table[x=x,y=y,y error=errory]
        {cluster_mus_0.5_2.dat};

        \legend{BDP Sampler, Quilting}

      \end{axis}
    \end{tikzpicture}
  \end{center}
  {\small (d) $\mu = 0.7$}
  \begin{center}
    \begin{tikzpicture}[scale=0.45]
      \begin{axis}[
        title={$\Theta_1$, $\mu=0.7$}, xlabel={Number of Expected Edges $e_M$},
        ylabel={Running Time (s)},
        cycle list name=black white, legend pos=north west]

        \addplot+[
        color=blue, mark=x, thick,
        error bars/.cd,
        y dir=both, y explicit,
        error mark=-]
        table[x=x,y=y,y error=errory]
        {icml_mus_0.7_1.dat};

        \addplot+[
        color=red, mark=o, mark options={solid}, thick, loosely dashed,
        error bars/.cd,
        y dir=both, y explicit,
        error mark=-]
        table[x=x,y=y,y error=errory]
        {cluster_mus_0.7_1.dat};

        \legend{BDP Sampler, Quilting}

      \end{axis}
    \end{tikzpicture}
    \begin{tikzpicture}[scale=0.45]
      \begin{axis}[
        title={$\Theta_2$, $\mu = 0.7$}, xlabel={Number of Expected Edges $e_M$},
        ylabel={Running Time (s)},
        cycle list name=black white, legend pos=north west]

        \addplot+[
        color=blue, mark=x, thick,
        error bars/.cd,
        y dir=both, y explicit,
        error mark=-]
        table[x=x,y=y,y error=errory]
        {icml_mus_0.7_2.dat};

        \addplot+[
        color=red, mark=o, mark options={solid}, thick, loosely dashed,
        error bars/.cd,
        y dir=both, y explicit,
        error mark=-]
        table[x=x,y=y,y error=errory]
        {cluster_mus_0.7_2.dat};

        \legend{BDP Sampler, Quilting}

      \end{axis}
    \end{tikzpicture}
  \end{center}
  {\small (e) $\mu = 0.9$}
  \begin{center}
    \begin{tikzpicture}[scale=0.45]
      \begin{axis}[
        title={$\Theta_1$, $\mu=0.9$}, xlabel={Number of Expected Edges $e_M$},
        ylabel={Running Time (s)},
        cycle list name=black white, legend pos=north west]

        \addplot+[
        color=blue, mark=x, thick,
        error bars/.cd,
        y dir=both, y explicit,
        error mark=-]
        table[x=x,y=y,y error=errory]
        {icml_mus_0.9_1.dat};

        \addplot+[
        color=red, mark=o, mark options={solid}, thick, loosely dashed,
        error bars/.cd,
        y dir=both, y explicit,
        error mark=-]
        table[x=x,y=y,y error=errory]
        {cluster_mus_0.9_1.dat};

        \legend{BDP Sampler, Quilting}

      \end{axis}
    \end{tikzpicture}
    \begin{tikzpicture}[scale=0.45]
      \begin{axis}[
        title={$\Theta_2$, $\mu = 0.9$}, xlabel={Number of Expected Edges $e_M$},
        ylabel={Running Time (s)},
        cycle list name=black white, legend pos=north west]

        \addplot+[
        color=blue, mark=x, thick,
        error bars/.cd,
        y dir=both, y explicit,
        error mark=-]
        table[x=x,y=y,y error=errory]
        {icml_mus_0.9_2.dat};

        \addplot+[
        color=red, mark=o, mark options={solid}, thick, loosely dashed,
        error bars/.cd,
        y dir=both, y explicit,
        error mark=-]
        table[x=x,y=y,y error=errory]
        {cluster_mus_0.9_2.dat};

        \legend{BDP Sampler, Quilting}

      \end{axis}
    \end{tikzpicture}
  \end{center}
  \caption{Comparison of running time (in seconds) of our algorithm
    vs the quilting algorithm of \citet{YunVis12} as a function of
    expected number of edges $e_M$ for two different values of $\Theta$
    and five values of $\mu$.}
  \label{fig:func_of_e}
\end{figure}
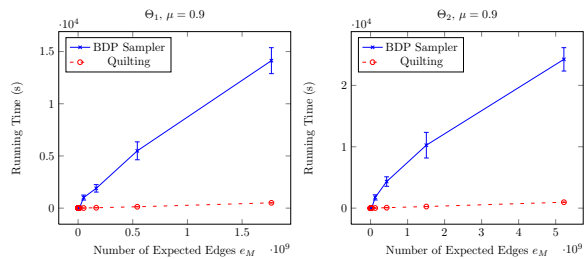

Figure~\ref{fig:func_of_e} shows the running time of our algorithm vs
\citet{YunVis12} as a function of expected number of edges $e_M$.  Each
experiment was repeated ten times to obtain error bars. As our algorithm
has theoretical time complexity guarantee, irrespective of $\mu$ the
running time is almost linear in $e_M$. On the other hand,
\citet{YunVis12} shows superb performance when dealing with relatively
dense graphs ($\mu > 0.5$), but when dealing with sparser graphs ($\mu <
0.5$) our algorithm outperforms.

Figure~\ref{fig:func_of_mu} shows the dependence of running time on
$\mu$ more clearly. In our parameter setting, the number of expected
edges is an increasing function of $\mu$ (see Figure~\ref{fig:e_mk_mm}
for $d=1$). As the time complexity of our algorithm depends on $e_M$,
the running time of our algorithm increases accordingly as $\mu$
increases. In the case of quilting algorithm, however, the running time
is almost symmetric with respect to $\mu = 0.5$. Thus, when $\mu < 0.5$
it is relatively inefficient, compared to when $\mu \geq 0.5$.

\begin{figure}
  {\small (a) $\mu \leq 0.5$ }
  \begin{center}
    \begin{tikzpicture}[scale=0.45]
      \begin{axis}[
        title={$\Theta_1$, $n=2^{17}$}, xlabel={$\mu$},
        ylabel={Running Time (s)},
        cycle list name=black white, legend pos=north west]

        \addplot+[
        color=blue, mark=x, thick,
        error bars/.cd,
        y dir=both, y explicit,
        error mark=-]
        table[x=x,y=y,y error=errory]
        {icml_mus_t_d_17_1.dat};

        \addplot+[
        color=red, mark=o, mark options={solid}, thick, loosely dashed,
        error bars/.cd,
        y dir=both, y explicit,
        error mark=-]
        table[x=x,y=y,y error=errory]
        {cluster_mus_t_d_17_1.dat};

        \legend{BDP Sampler, Quilting}

      \end{axis}
    \end{tikzpicture}
    \begin{tikzpicture}[scale=0.45]
      \begin{axis}[
        title={$\Theta_2$, $n=2^{17}$}, xlabel={$\mu$},
        ylabel={Running Time (s)},
        cycle list name=black white, legend pos=north west]

        \addplot+[
        color=blue, mark=x, thick,
        error bars/.cd,
        y dir=both, y explicit,
        error mark=-]
        table[x=x,y=y,y error=errory]
        {icml_mus_t_d_17_2.dat};

        \addplot+[
        color=red, mark=o, mark options={solid}, thick, loosely dashed,
        error bars/.cd,
        y dir=both, y explicit,
        error mark=-]
        table[x=x,y=y,y error=errory]
        {cluster_mus_t_d_17_2.dat};

        \legend{BDP Sampler, Quilting}

      \end{axis}
    \end{tikzpicture}
  \end{center}

  {\small (b) General Value of $\mu$}
  \begin{center}
    \begin{tikzpicture}[scale=0.45]
      \begin{axis}[
        title={$\Theta_1$, $n=2^{17}$}, xlabel={$\mu$},
        ylabel={Running Time (s)},
        cycle list name=black white, legend pos=north west]

        \addplot+[
        color=blue, mark=x, thick,
        error bars/.cd,
        y dir=both, y explicit,
        error mark=-]
        table[x=x,y=y,y error=errory]
        {icml_mus_d_17_1.dat};

        \addplot+[
        color=red, mark=o, mark options={solid}, thick, loosely dashed,
        error bars/.cd,
        y dir=both, y explicit,
        error mark=-]
        table[x=x,y=y,y error=errory]
        {cluster_mus_d_17_1.dat};

        \legend{BDP Sampler, Quilting}

      \end{axis}
    \end{tikzpicture}
    \begin{tikzpicture}[scale=0.45]
      \begin{axis}[
        title={$\Theta_2$, $n=2^{17}$}, xlabel={$\mu$},
        ylabel={Running Time (s)},
        cycle list name=black white, legend pos=north west]

        \addplot+[
        color=blue, mark=x, thick,
        error bars/.cd,
        y dir=both, y explicit,
        error mark=-]
        table[x=x,y=y,y error=errory]
        {icml_mus_d_17_2.dat};

        \addplot+[
        color=red, mark=o, mark options={solid}, thick, loosely dashed,
        error bars/.cd,
        y dir=both, y explicit,
        error mark=-]
        table[x=x,y=y,y error=errory]
        {cluster_mus_d_17_2.dat};

        \legend{BDP Sampler, Quilting}

      \end{axis}
    \end{tikzpicture}
  \end{center}
  \caption{Comparison of running time (in seconds) of our
    algorithm vs the quilting algorithm of \citet{YunVis12} as a
    function of $\mu$ for two different values of $\Theta$ and
    $n=2^{17}$. }
  \label{fig:func_of_mu}
\end{figure}

\section{Conclusion}
\label{sec:Conclusion}

We introduced a novel and efficient sampling algorithm for the MAGM. The
run-time of our algorithm depends on $e_{K}$ and $e_M$. For sparse
graphs, which are primarily of interest in applications, the value of
$e_{M}$ is well bounded, and our method is able to outperform the
quilting algorithm. However, when $\mu$ is greater than $0.5$, MAGM
produces dense graphs. In this case the heuristics of \citet{YunVis12}
work well in practice. One can combine the two algorithms to produce a
fast hybrid algorithm. Theoretical investigation of the quilting
algorithm and its heuristics may provide more insights into improving
both algorithms.

For the parameter settings we studied the corresponding KPGM graphs are
sparse and can be sampled efficiently. However, for some values of
$\Theta$ the corresponding KPGM graphs can become dense and difficult to
sample. Removing dependency of time complexity on $e_K$ remains an open
question, and a focus of our future research.

\bibliography{accrej.bib}
\bibliographystyle{icml2012}

\newpage

\appendix

\section{Technical Proofs (not included in 8 page limit)}
\label{sec:proofs}

\subsection{Proof of Theorem~\ref{thm:bdp}}
\label{sec:ProofTheorrefthm:bdp}

\begin{proof}
  By conditioning on the number of edges $\abr{\Ecal}$, the probability
  mass function can be written as
  \begin{align}
    \PP\sbr{A} &= \PP \sbr{ \abr{\Ecal}} \cdot \PP \sbr{A \mid \abr{\Ecal} }.
  \end{align}
  Recall that the marginal distribution of $\abr{\Ecal}$ follows Poisson
  distribution with rate parameter $e_K$.  Using
  \eqref{eq:skg_edgenum} and the definition of a Poisson probability
  mass function,
  \begin{align}
    P\sbr{\abr{\Ecal} } = \exp\rbr{-\sum_{i,j=1}^n \Gamma_{ij}} \frac{\rbr{\sum_{i,j=1}^n
        \Gamma_{ij}}^{\abr{\Ecal}}}{\abr{\Ecal}!}.
  \end{align}
  On the other hand, the conditional distribution of $A$ given $\abr{\Ecal}$
  is defined by the multinomial distribution, and its probability mass
  function is given by
  \begin{align}
    \PP\sbr{A \mid \abr{\Ecal}} &= \binom{\abr{\Ecal}}{A_{1,1} A_{1,2} \cdots
      A_{n, n}} \nonumber \\ & \;\;\;\;\; \cdot \prod_{i,j=1}^n
    \rbr{\frac{\Gamma_{ij}}{\sum_{i,j=1}^n \Gamma_{ij}}}^{A_{ij}},
  \end{align}
  where $\binom{\abr{\Ecal}}{A_{1,1} A_{1,2} \cdots A_{n, n}}$ is the
  multinomial coefficient. By definition $|E| := \sum_{i,j=1}^n
  A_{ij}$ and after some simple algebra, we have
  \begin{align}
    \PP\sbr{A} = \prod_{i,j=1}^n \exp\rbr{-\Gamma_{ij}}
    \frac{\Gamma_{ij}^{A_{ij}}} {A_{ij}!}.
  \end{align}
  By the factorization theorem, every $A_{ij}$ is independent of each
  other. Furthermore, $A_{ij}$ follows a Poisson distribution with rate
  parameter $\Gamma_{ij}$.
\end{proof}

\subsection{Proof of Theorem~\ref{thm:color_bound}}
\begin{proof}
  For $c \in \Fcal$, we apply the multiplicative form of
  Hoeffding-Chernoff inequality (Chapter 35.1, \citet{Dasgupta2011})
  to get
  \begin{align}
    \PP\sbr{\abr{\Vcal_c} \geq \log_2 n \cdot \EE\sbr{\abr{\Vcal_c}}}
    &< \rbr{\frac{\exp\rbr{\log_2 n - 1}}{\rbr{\log_2 n}^{\log_2
          n}}}^{\EE\sbr{\abr{\Vcal_c}}} \\
    &\leq \rbr{\frac{\exp\rbr{\log_2 n - 1}}{\rbr{\log_2 n}^{\log_2
          n}}},
  \end{align}
  for large enough $n$. Then using the union bound
  \begin{align}
    &\PP\sbr{\bigcup_{c \in \Fcal} \abr{\Vcal_c} \geq \log_2 n \cdot
      \EE\sbr{\abr{\Vcal_c}}} \leq \sum_{c \in \Fcal} \PP\sbr{\abr{\Vcal_c} \geq
      \log_2 n \cdot
      \EE\sbr{\abr{\Vcal_c}}} \\
    &\;\;\;\leq n \cdot \rbr{\frac{\exp\rbr{\log_2 n - 1}}{\rbr{\log_2
          n}^{\log_2 n}}} \rightarrow 0
  \end{align}
  as $n \rightarrow \infty$.  For $c \in \Ical$, on the other hand, we
  apply the additive form of Hoeffding-Chernoff inequality:
  \begin{align}
    &\PP\sbr{\abr{\Vcal_c} \geq \log_2 n} 
    < 
    \rbr{\frac{\EE\sbr{\abr{\Vcal_c}}}{\log_2 n}}^{\log_2 n} 
    \cdot
    \rbr{\frac{1-\EE\sbr{\abr{\Vcal_c}}/n}{1-\log_2 n/n}}^{n - \log_2 n}\\
    &\;\;\leq n \rbr{\frac 1 {\log_2 n} \cdot \frac{1-\log_2
        n/n}{1-1/n}}^{\log_2 n}.
  \end{align}
  Using union bound again, 
  \begin{align}
    \PP\sbr{\bigcup_{c \in \Ical} \abr{\Vcal_c} \geq \log_2 n }
    \rightarrow 0.
  \end{align}
\end{proof}

\subsection{Proof of Theorem~\ref{thm:validity}}
\label{sec:ProofTheorrefthm:validity}
\begin{proof}
  Let $\Lambda^{(\Acal, \Bcal)}$ be rate parameter matrix of
  $B^{(\Acal\Bcal)}$. From the definition \eqref{eq:kpgm_p2}, 
  \begin{align}
    \Lambda^{(\Acal, \Bcal)} := \Theta^{(\Acal,\Bcal)(1)} \otimes
    \Theta^{(\Acal,\Bcal)(2)} \otimes \cdots \otimes \Theta^{(\Acal,
      \Bcal)(d)}.
    \label{eq:proposal_r}
  \end{align}
  From \eqref{eq:proposal_thetat} and
  \eqref{eq:proposal_r}, it is easy to verify that
  \begin{align}
    \begin{array}{lcl}
      \Lambda^{(\Fcal\Fcal)}_{cc'} = \rbr{m_{\Fcal}}^2 \cdot 
      \EE\sbr{\abr{\Vcal_c}} \cdot \EE\sbr{\abr{\Vcal_{c'}}} \cdot \Gamma_{cc'} & \text{
        for } & c \in \Fcal, c' \in \Fcal,\\
      \Lambda^{(\Fcal\Ical)}_{cc'} = m_{\Fcal} \cdot m_{\Ical} \cdot \EE\sbr{\abr{\Vcal_c}} \cdot \Gamma_{cc'} & \text{
        for } & c \in \Fcal, c' \in \Ical,\\
      \Lambda^{(\Ical\Fcal)}_{cc'} = m_{\Ical} \cdot m_{\Fcal} \cdot \EE\sbr{\abr{\Vcal_{c'}}} \cdot \Gamma_{cc'} & \text{
        for } & c \in \Ical, c' \in \Fcal,\\
      \Lambda^{(\Ical\Ical)}_{cc'} = \rbr{m_{\Ical}}^2 \Gamma_{cc'} & \text{
        for } &c \in \Ical, c' \in \Ical. \nonumber
    \end{array}
  \end{align}
  Using \eqref{eq:r_cc_def} and \eqref{eq:def_m_f}
  obtains
  \begin{align}
    \Lambda_{cc'} \leq \Lambda^{(\Acal\Bcal)}_{cc'} \leq \Lambda'_{cc'},
  \end{align}
  for any $\Acal, \Bcal \in \cbr{\Fcal, \Ical}$, $c \in \Acal$, and $c'
  \in \Bcal$.
\end{proof}

\newpage

\section{Pseudo-Code of Algorithms}
\label{sec:pseudo_code}

\begin{algorithm}[ht]
  \caption{Description of Ball-Dropping Process}
  \label{alg:kpgmsample}
  \begin{algorithmic}
    \STATE {\bfseries Function \texttt{BDP}}
    \STATE {\bfseries Input:} parameter $\Thetat$
    \STATE {\bfseries Output:} set of edges $\Ecal$
    
    \STATE $\Ecal \gets \emptyset$
    \STATE $e_K \gets \prod_{k=1}^d\rbr{\theta^{(k)}_{00} +\theta^{(k)}_{01}
      +\theta^{(k)}_{10}+\theta^{(k)}_{11}}$
    \STATE Generate $X \sim Poisson(e_K)$.
    \FOR{$x=1$ to $X$}
    \STATE $S_{start}, T_{start} \gets 1$
    \STATE $S_{end}, T_{end} \gets n$
    \FOR{$k \gets 1$ to $d$}
    \STATE Sample $(a,b) \propto \theta_{ab}^{(k)}$
    \STATE $S_{start} \gets S_{start} +  \frac {an} {2^k}$.
    \STATE $T_{start} \gets T_{start} + \frac {bn} {2^k}$.
    \STATE $S_{end} \gets S_{end} -  \frac {(1-a)n} {2^k}$.
    \STATE $T_{end} \gets T_{end} - \frac {(1-b)n} {2^k}$.
    \ENDFOR
    \STATE {\# We have $S_{start} = S_{end}$,
      $T_{start} = T_{end}$}
    \STATE $\Ecal \gets \Ecal \cup \{ (S_{start}, T_{start}) \}$
    \ENDFOR
  \end{algorithmic}
\end{algorithm}

\begin{algorithm}[t]
  \caption{BDP Sampler of MAGM}
  \label{alg:bdp_sample}
  \begin{algorithmic}
    \STATE {\bfseries Input:} parameters $\Thetat$, $\mut$
    \STATE {\bfseries Output:} set of edges $\Ecal$
    \STATE $\Ecal \gets \emptyset$
    \FOR{$\Acal$ in $\cbr{\Fcal, \Ical}$}
    \FOR{$\Bcal$ in $\cbr{\Fcal, \Ical}$}
    \FOR{$(c,c')$ in $\texttt{BDP}\rbr{\Thetat^{(\Acal,\Bcal)}}$}
    \IF{$c \in \Acal$ and $c' \in \Bcal$}
    \STATE Generate $u \sim Uniform(0,1)$.
    \IF{$u \leq \frac{\Lambda_{ij}}{\Lambda_{ij}^{(\Acal\Bcal)}}$}
    \STATE Sample $i$ uniformly from $\Vcal_c$.
    \STATE Sample $j$ uniformly from $\Vcal_{c'}$.
    \STATE $\Ecal \gets \Ecal \cup \cbr{(i,j)}$.
    \ENDIF
    \ENDIF
    \ENDFOR
    \ENDFOR
    \ENDFOR
  \end{algorithmic}
\end{algorithm}

\end{document}